\documentclass{article} 
\usepackage{iclr2026_conference,times}
\usepackage{algorithm}
\usepackage{algpseudocode}
\algtext*{EndFor}%
\algtext*{EndIf}%
\algtext*{EndWhile}%
\algtext*{EndProcedure}%
\usepackage{amsmath}
\usepackage{amsthm}
\usepackage{xcolor}
\usepackage{adjustbox}

\newcommand{\best}[1]{\textbf{#1}}
\newcommand{\second}[1]{\underline{#1}}

\usepackage{multirow}
\usepackage{makecell}
\usepackage{amssymb} 

\usepackage{amsmath,amsfonts,bm}

\def\eqref#1{equation~\ref{#1}}

\def\1{\bm{1}}

\DeclareMathAlphabet{\mathsfit}{\encodingdefault}{\sfdefault}{m}{sl}
\SetMathAlphabet{\mathsfit}{bold}{\encodingdefault}{\sfdefault}{bx}{n}

\usepackage{booktabs} 
\usepackage{graphicx} 
\usepackage{multirow}
\usepackage{colortbl} 
\usepackage{hyperref}
\hypersetup{hidelinks}
\usepackage{cleveref}
\usepackage{url}
\usepackage{xspace}
\usepackage{booktabs}
\usepackage{multirow}
\usepackage{graphicx}
\usepackage{booktabs}
\newtheorem{proposition}{Proposition}[section]
\newtheorem{corollary}[proposition]{Corollary}
\newcommand{\alg}{\texttt{SPOT}}

\title{SPOT: Sparse Probing and Outcome \\
Calibration for On-Policy Distillation}

\author{Zikun Qu$^{1}$,
Min Zhang$^{2}$\thanks{Corresponding authors}\hspace{1.6mm},
Mingze Kong$^{1}$,
Zhiwei Shang$^{1}$,
Zhengyu Chen$^{3}$
Yikun Ban$^{4}$,\\
\textbf{Shuang Qiu}$^{5}$,
\textbf{Zhongxiang Dai}$^{1*}$\\
$^{1}$The Chinese University of Hong Kong, Shenzhen,
$^{2}$East China Normal University,\\
$^{3}$Meituan LongCat
$^{4}$Beihang University,
$^{5}$City University of Hong Kong
}

\iclrfinalcopy 
\begin{document}

\maketitle


\begin{abstract}
On-policy distillation (OPD) provides dense teacher supervision on student-generated trajectories, but standard reverse-KL training can assign insufficient probability to other plausible continuations. Teacher entropy alone does not reveal whether uncertainty is concentrated among a few plausible next tokens or dispersed over a long probability tail, nor whether the student already represents those candidates well. Moreover, local teacher probabilities may not predict downstream success. We introduce \emph{\underline{S}parse \underline{P}robing and \underline{O}utcome-calibrated \underline{T}argets} OPD (\alg{}), which addresses two coupled decisions, \emph{where to probe} and \emph{what to distill}, through an acquisition--exploration--exploitation procedure. During acquisition, a position-level score combines normalized teacher entropy, the probability mass captured by a small top-$k$ candidate set, and student--teacher mismatch to allocate a limited probing budget. During exploration, \alg{} evaluates teacher-proposed candidates through verifier-scored student continuations. During exploitation, these outcomes produce a closed-form, KL-regularized target that favors candidates with better downstream outcomes while remaining anchored to the teacher distribution. Extensive experiments across multiple student models and reasoning benchmarks demonstrate the effectiveness of \alg{} in improving reasoning performance while balancing solution quality and coverage.
\end{abstract}

\vspace{-1mm}
\section{Introduction}
\label{sec:intro}
\vspace{-1mm}

Transferring the reasoning capabilities of large language models to smaller student models is an important goal in LLM post-training.
Supervised fine-tuning and off-policy distillation leverage expert- or teacher-generated trajectories, but train the student on contexts that differ from those induced by its own predictions, leading to exposure bias and compounding errors at inference
\citep{bengio2015scheduledsamplingsequenceprediction,ranzato2016sequenceleveltrainingrecurrent}.
On-policy reinforcement learning (RL) instead optimizes student-generated rollouts, but reasoning-oriented RL typically relies on sequence-level or terminal verifier rewards, providing limited fine-grained credit assignment
\citep{shao2024deepseekmathpushinglimitsmathematical,guo2025learningfocuscausalattention}.
On-policy distillation (OPD) combines these advantages: it trains on student-generated prefixes while retaining dense token-level teacher feedback
\citep{agarwal2024onpolicydistillationlanguagemodels,lu2025onpolicydistillation}.

Yet standard OPD typically minimizes reverse KL, whose mode-seeking behavior favors the teacher's dominant continuation but can assign insufficient probability to other plausible continuations, potentially limiting solution coverage.
EOPD addresses this limitation by using teacher entropy as a trigger: at high-entropy positions, it augments reverse-KL training with a top-$k$ approximation of forward KL to preserve plausible local alternatives~\citep{Minka2005DivergenceMA,jin2026entropyawareonpolicydistillationlanguage}.
However, high entropy alone does not reveal whether the teacher's uncertainty is concentrated among a few plausible next tokens or dispersed over a long probability tail.
Nor does it reveal whether the student already assigns sufficient probability to and similarly ranks those plausible tokens.
Evaluating an alternative next token requires rolling out a student continuation and checking its final outcome, making such probing costly.
An entropy threshold can flag uncertain positions, but by itself cannot prioritize where a limited probing budget will be most useful.
The first challenge is therefore to allocate that budget to positions where the teacher assigns substantial probability to a small set of alternatives that the student does not yet represent well.

Selecting a candidate position does not yet determine \emph{what to distill}.
On student-generated prefixes, the teacher's local next-token probabilities need not predict downstream success: a token assigned higher probability by the teacher may yield an unsuccessful continuation under the current student policy, whereas a lower-probability alternative may yield a successful one~\citep{li2026rethinkingonpolicydistillationlarge,hou2026uniopdunifyingonpolicydistillation}.
We therefore treat the teacher distribution as a proposal prior over candidate branches, rather than as definitive evidence of downstream success, and calibrate it using the outcomes of student continuations.
Thus, teacher uncertainty and student mismatch help prioritize where to acquire additional evidence, while verified downstream outcomes determine how that evidence should modify the supervision target.

These observations motivate \textbf{Sparse Probing and Outcome-calibrated Targets for on-policy distillation} (\alg), which reframes uncertainty-aware distillation around two coupled decisions: \emph{where to probe} and \emph{what to distill}.
\alg{} addresses them through a three-stage acquisition--exploration--exploitation procedure.
During acquisition, a lightweight position-level score $s_t$ prioritizes positions that satisfy three conditions: the teacher assigns meaningful probability to multiple next tokens, most of the teacher's probability mass lies within a small top-$k$ candidate set, and the student either underweights or differently ranks those candidates.
Because these factors are multiplied, a low value on any one condition lowers the position's overall probing priority.
The score therefore estimates \emph{where additional evidence may be useful}; it does not identify which candidates yield successful student continuations, which is assessed only after the candidates are rolled out and verified.
During exploration, at each selected position, \alg{} appends each candidate from the teacher's top-$k$ set in turn, rolls out a continuation under the student policy, and evaluates the completed continuation with a verifier.
During exploitation, the verified continuation values produce a closed-form, KL-regularized target: candidates with better downstream outcomes receive more probability, while the target remains anchored to the teacher distribution.
\alg{} applies this additional local loss only at positions where at least one tested candidate receives positive verifier reward.
This allows downstream evidence to modify the local token-level target rather than serving only as a trajectory-level score.

In summary, our contributions are threefold:
\vspace{-5pt}

{
\setlength{\leftmargini}{1.5em}
\begin{itemize}
    \item \textbf{A two-decision formulation.}
    We formulate selective supervision in OPD as determining \emph{where} additional outcome evidence is worth acquiring and \emph{how} that evidence should be converted into a supervision target, separating position selection from target construction.

    \item \textbf{Sparse probing and outcome-calibrated targets.}
    We propose \alg{}, which probes positions via teacher uncertainty, top-$k$ mass, and student mismatch, then derives a closed-form target by reward-tilting the teacher distribution with verifier-scored student continuations.

    \item \textbf{Empirical validation.}
    Across three evaluated Qwen student scales and six mathematical reasoning benchmarks, \alg{} achieves the highest macro Pass@8 in all three settings and the highest or second-highest macro Avg@8 among the compared methods.
    These results indicate stronger multi-sample solution coverage while maintaining competitive average accuracy.
\end{itemize}
}

\vspace{-3mm}
\section{Preliminaries}
\label{sec:prelim}
\vspace{-2mm}

\textbf{On-Policy Distillation.}
Let \(\mathcal{D}\) denote the prompt distribution, and let
\(\pi_\theta\) and \(\pi_T\) be the student and teacher policies over a
shared vocabulary \(\mathcal{V}\). Given \(q\sim\mathcal{D}\), OPD samples a
student trajectory \(x=(x_1,\ldots,x_T)\sim\pi_\theta(\cdot\mid q)\) and
queries the teacher on each student-induced prefix \(c_t=(q,x_{<t})\). At
each prefix, it uses the reverse-KL loss
\(\mathcal{L}^{\mathrm{OPD}}_t :=
D_{\mathrm{KL}}(\pi_\theta(\cdot\mid c_t)\|\pi_T(\cdot\mid c_t))\).
Averaging over the student trajectory gives
\begin{equation}
\label{eq:opd}
\mathcal{L}_{\mathrm{OPD}}(\theta)
=
\mathbb{E}_{q\sim\mathcal{D},\,x\sim\pi_\theta(\cdot\mid q)}
\!\left[\frac{1}{T}\sum_{t=1}^{T}\mathcal{L}^{\mathrm{OPD}}_t\right].
\end{equation}
In practice, this objective is estimated from tokens sampled by a frozen
behavior policy \(\pi_{\theta_{\mathrm{old}}}\) and optimized with PPO-style
clipping. Querying the teacher on student-visited prefixes reduces
state-distribution mismatch and provides dense token-level feedback.
However, the mode-seeking reverse KL can under-cover plausible alternatives
when the teacher is uncertain.

\textbf{Entropy-Aware On-Policy Distillation.}
EOPD~\citep{jin2026entropyawareonpolicydistillationlanguage} augments OPD
with mode-covering supervision at positions of high teacher entropy. The raw
entropy and its vocabulary-normalized counterpart are
\begin{equation}
\label{eq:teacher_entropy}
H_T(c_t)
 := -\sum_{v\in\mathcal{V}}\pi_T(v\mid c_t)\log\pi_T(v\mid c_t),
\qquad
\bar H_T(c_t)
 := \frac{H_T(c_t)}{\log|\mathcal{V}|}\in[0,1].
\end{equation}
EOPD gates supervision using \(H_T\), whereas our acquisition score later
uses the bounded, vocabulary-normalized entropy \(\bar H_T\). For efficiency,
let \(S_t^k:=\operatorname{TopK}_k(\pi_T(\cdot\mid c_t))\) denote the
teacher's top-\(k\) candidate set. For \(j\in\{T,\theta\}\), write
\(\bar\pi_j^k(v\mid c_t):=\pi_j(v\mid c_t)/
\sum_{u\in S_t^k}\pi_j(u\mid c_t)\) for the restriction of \(\pi_j\)
renormalized on this candidate set. EOPD approximates forward KL as
\begin{equation}
\label{eq:topk_fkl}
\mathcal{L}^{\mathrm{FKL}}_t
 := \sum_{v\in S_t^k}\bar\pi_T^k(v\mid c_t)
\log\frac{\bar\pi_T^k(v\mid c_t)}{\pi_\theta(v\mid c_t)}.
\end{equation}
Only the teacher target is truncated and renormalized; the student retains
its full-vocabulary probabilities. With entropy threshold \(\tau\) and
forward-KL weight \(\alpha\), the per-token objective is
\begin{equation}
\label{eq:eopd}
\mathcal{L}^{\mathrm{EOPD}}_t
 := \mathcal{L}^{\mathrm{OPD}}_t
+\alpha\,\mathbb{I}\!\left[H_T(c_t)>\tau\right]\mathcal{L}^{\mathrm{FKL}}_t.
\end{equation}
Here, \(\pi_j\) always denotes a full-vocabulary policy, while
\(\bar\pi_j^k\) denotes its top-\(k\)-renormalized shape---a distinction that
separates probability mass from relative shape in our method. EOPD promotes
coverage, but still uses a scalar entropy criterion to decide \emph{where}
to intervene and the uncalibrated teacher prior to determine \emph{what} to
distill; \alg{} revisits both decisions.


\vspace{-2mm}
\section{Methodology}
\vspace{-2mm}

\subsection{Overview}
\label{sec:overview}
\begin{figure*}[t]
    \centering
    \includegraphics[width=\textwidth]{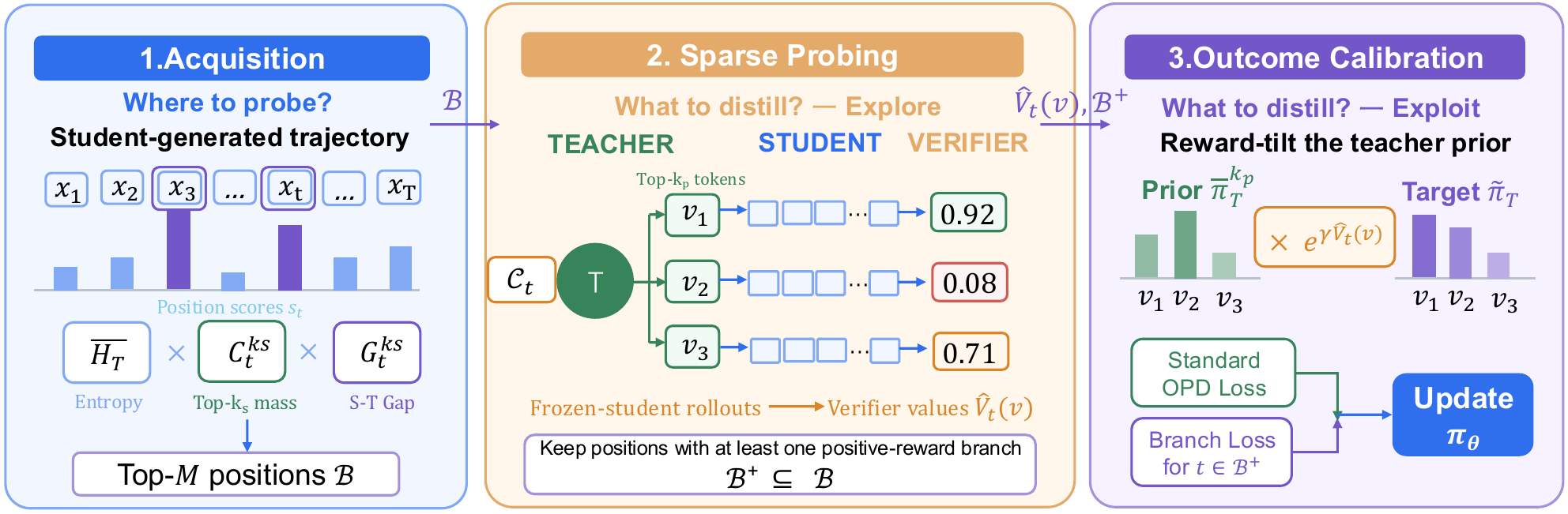}
    \vspace{-3mm}
    \caption{An overview of our \alg{} framework for on-policy distillation.}
    \vspace{-3mm}
    \label{fig:pipeline}
\end{figure*}
\vspace{-5pt}

\alg{} follows a three-stage acquisition--exploration--exploitation procedure that addresses two coupled decisions: \emph{where to probe} and \emph{what to distill}. As illustrated in \Cref{fig:pipeline},
acquisition uses a lightweight position-level score to prioritize where to acquire additional evidence; exploration estimates the downstream values of teacher-proposed next-token candidates through verifier-scored student continuations; and exploitation combines these values with the teacher probabilities to construct outcome-calibrated targets. In this way, acquisition allocates the probing budget, while exploration and exploitation determine how the acquired evidence modifies local supervision.

\begin{algorithm}[t]
\caption{\alg{} Training}
\label{alg:spot}
\begin{algorithmic}[1]
\Require Student $\pi_\theta$, teacher $\pi_T$, verifier $R$, prompt set $\mathcal{D}$, learning rate $\eta$
\Require Position budget $M$, top-$k$ sizes $k_s$ (scoring) and $k_p$ (probing), inverse temperature $\gamma$, branch-loss weight $\beta$
\For{each training iteration}
  \State $\pi_{\theta_{\mathrm{old}}}\leftarrow\pi_\theta$
  \State Rollout buffer $\mathcal{R}\leftarrow\emptyset$
  \For{each prompt $q$ in a batch sampled from $\mathcal{D}$}
    \State Roll out $x\sim\pi_{\theta_{\mathrm{old}}}(\cdot\mid q)$
    \State Query the teacher $\pi_T(\cdot\mid c_t)$ at every prefix $c_t=(q,x_{<t})$
    \State Score every valid position by its position-level acquisition score
      \[
        s_t=\bar{H}_T(c_t)\cdot C_t^{k_s}\cdot G_t^{k_s}
      \]
      \Statex \hspace{\algorithmicindent}using Eqs.~(\ref{eq:spot_concentration})--(\ref{eq:spot_score})
    \State Select sparse probing positions $\mathcal{B}=\operatorname{Top\text{-}}M\bigl(\{s_t\}_{t=1}^{|x|}\bigr)$
    \For{each position $t\in\mathcal{B}$ and candidate $v\in S_t^{k_p}$}
      \State Sample $y\sim\pi_{\theta_{\mathrm{old}}}(\cdot\mid c_t,v)$ and estimate $\hat{V}_t(v)$ using $R(q,x_{<t},v,y)$
    \EndFor
    \State Retain positions with at least one positive-reward candidate as $\mathcal{B}^{+}\subseteq\mathcal{B}$
    \State Form reward-tilted targets $\tilde{\pi}_T(\cdot\mid c_t)$ for $t\in\mathcal{B}^{+}$ using Eq.~(\ref{eq:spot_target})
    \State Add the rollout and its supervision to $\mathcal{R}$
  \EndFor
  \For{each gradient step on a mini-batch sampled from $\mathcal{R}$}
    \State Update $\theta\leftarrow\theta-\eta\nabla_\theta\mathcal{L}$ by minimizing
      \[
        \mathcal{L}=\frac{1}{T}\sum_{t=1}^{T}\mathcal{L}^{\mathrm{OPD}}_t
        +\frac{\beta}{\max\{1,|\mathcal{B}^{+}|\}}\sum_{t\in\mathcal{B}^{+}}\mathcal{L}^{\mathrm{Branch}}_t
      \]
      \Statex \hspace{\algorithmicindent}using Eq.~(\ref{eq:spot_objective})
  \EndFor
\EndFor
\end{algorithmic}
\end{algorithm}

\vspace{-2mm}
\subsection{SPOT: Sparse Probing and Outcome-Calibrated Targets}
\label{sec:spot}
\vspace{-2mm}

\textbf{Position Acquisition: Where to Probe.}
Because probing next-token candidates requires extra rollouts, acquisition prioritizes positions where several teacher candidates are plausible, a small candidate set captures most teacher mass, and the student does not already represent those candidates well. Entropy alone cannot distinguish this pattern from uncertainty spread over a long tail. Recent analyses also suggest that OPD gains depend on student--teacher compatibility and genuinely new teacher information~\citep{li2026rethinkingonpolicydistillationlarge}. We therefore use student--teacher mismatch to signal potential correction needed, while verifier-scored rollouts test the utility of the selected candidates.

Starting from the normalized teacher entropy $\bar{H}_T(c_t)$, let $k_s$
denote the number of teacher candidates used to compute the position-level
acquisition score. We first measure the teacher mass captured by its
top-$k_s$ candidate set:
\begin{equation}
C_t^{k_s} = \sum_{v \in S_t^{k_s}} \pi_T(v\mid c_t).
\label{eq:spot_concentration}
\end{equation}
The quantity \(C_t^{k_s}\) is the teacher probability mass captured by its
top-\(k_s\) candidates. A large value means that a small candidate set
represents most of the teacher distribution. Combined with high entropy,
this favors \emph{positions with multiple plausible candidates} in that set over
positions whose uncertainty is dispersed across a long probability tail.

Teacher-side structure, however, does not imply that the resulting
supervision is useful to a student that may already represent the same
alternatives. We thus measure the student--teacher gap on the same candidate
set:
\begin{equation}
G_t^{k_s} = \lambda_{\mathrm{mass}}(1 - A_t^{k_s}) + \lambda_{\mathrm{shape}} D_{\mathrm{JS}}(\bar{\pi}_T^{k_s} \,\|\, \bar{\pi}_{\theta_{\mathrm{old}}}^{k_s}),
\label{eq:spot_gap}
\end{equation}
where $A_t^{k_s} = \sum_{v \in S_t^{k_s}} \pi_{\theta_{\mathrm{old}}}(v\mid c_t)$ and $\lambda_{\mathrm{mass}}+\lambda_{\mathrm{shape}}=1$. Here $D_{\mathrm{JS}}$ denotes the Jensen--Shannon divergence normalized to $[0,1]$; the two terms capture mass under-coverage and relative-shape mismatch, respectively.

The final acquisition score $s_t$ acts as a soft conjunction: a position receives a high score
only when the teacher presents multiple plausible candidates, the
top-\(k_s\) set captures most teacher probability mass, and the student
either underweights or differently ranks those candidates:
\begin{equation}
s_t = \bar{H}_T(c_t) \cdot C_t^{k_s} \cdot G_t^{k_s}.
\label{eq:spot_score}
\end{equation}
After masking special, padding, whitespace-only, and punctuation-only tokens, we allocate probing to $\mathcal{B}=\operatorname{Top\text{-}}M(\{s_t\}_{t=1}^{|x|})$. Consequently, the rollout budget is reserved for compact teacher alternatives that the current student neither adequately covers nor matches.

\paragraph{Sparse Probing and Outcome Calibration: What to Distill.}
Acquisition prioritizes where to probe, but a teacher's local next-token
probability does not by itself predict downstream success. Prior work
suggests that it can become a weaker guide on deeper student-generated
prefixes~\citep{li2026rethinkingonpolicydistillationlarge}; moreover, a
teacher-preferred token may lead to an incorrect solution, while a
less-preferred candidate may still enable a correct one
~\citep{hou2026uniopdunifyingonpolicydistillation}.
We therefore treat the teacher distribution as a proposal prior (an initial
preference over candidates) and calibrate it using downstream outcomes.

To obtain outcome-grounded evidence, we perform \textit{sparse probing} at each selected position $t\in\mathcal{B}$. Let $k_p$ denote the number of teacher candidates probed at each selected position and $N_p$ the number of continuations sampled per candidate. For each $v\in S_t^{k_p}$, we append $v$ to the prefix $c_t$ and independently sample $N_p$ continuations $y^{(n)}\sim\pi_{\theta_{\mathrm{old}}}(\cdot\mid c_t,v)$, $n=1,\ldots,N_p$, from the frozen behavior policy. These continuations probe the candidate-conditioned branch induced by $v$, whose student-executable value is
\begin{equation}
\hat{V}_t(v)=\mathbb{E}_{y\sim\pi_{\theta_{\mathrm{old}}}(\cdot\mid c_t,v)}\!\left[R(q,x_{<t},v,y)\right],
\label{eq:spot_value}
\end{equation}
which measures whether the current student can turn a teacher-proposed local alternative into a high-reward trajectory. To avoid imposing additional supervision where exploration finds no viable alternative, we retain only positions for which at least one candidate has a positive estimated continuation value; the resulting set is denoted by $\mathcal{B}^{+}\subseteq\mathcal{B}$. Because probing evaluates at most $M\times k_p\times N_p$ candidate continuations per trajectory, its overhead is explicitly controlled by the position, candidate, and per-candidate sampling budgets.

To convert the probed outcomes into actionable supervision, we construct \textit{outcome-calibrated targets} at each retained position $t\in\mathcal{B}^{+}$. Let $\Delta(S_t^{k_p})=\{\rho_t:\rho_t(v)\geq 0,\,\sum_{v\in S_t^{k_p}}\rho_t(v)=1\}$ denote the probability simplex over the teacher candidate set. Rather than collapsing onto the empirically best candidate, we maximize expected downstream value within a KL trust region around the renormalized teacher prior:
\begin{equation}
\max_{\rho_t\in\Delta(S_t^{k_p})}\;\sum_{v\in S_t^{k_p}}\rho_t(v)\hat{V}_t(v)\quad\text{s.t.}\quad D_{\mathrm{KL}}\!\left(\rho_t\,\|\,\bar{\pi}_T^{k_p}(\cdot\mid c_t)\right)\le\epsilon.
\label{eq:spot_target_optimization}
\end{equation}
The derivation is deferred to Appendix~\ref{app:target_theory}. For the corresponding dual parameter $\gamma>0$, which acts as an inverse temperature, the problem admits the closed-form optimizer
\begin{equation}
\tilde{\pi}_T(v\mid c_t)=\frac{\bar{\pi}_T^{k_p}(v\mid c_t)\exp\!\left(\gamma\hat{V}_t(v)\right)}{\sum_{u\in S_t^{k_p}}\bar{\pi}_T^{k_p}(u\mid c_t)\exp\!\left(\gamma\hat{V}_t(u)\right)}.
\label{eq:spot_target}
\end{equation}

In log space, \eqref{eq:spot_target} reads $\log\tilde{\pi}_T(v\mid c_t)=\log\bar{\pi}_T^{k_p}(v\mid c_t)+\gamma\hat{V}_t(v)-\log Z_t$, exposing each target log-probability as \emph{teacher log-probability plus an outcome-grounded continuation-value bonus}, up to the normalizer $Z_t$. Subtracting this expression for any $u,v\in S_t^{k_p}$ yields the pairwise log-odds decomposition
\begin{equation}
\log\frac{\tilde{\pi}_T(v\mid c_t)}{\tilde{\pi}_T(u\mid c_t)}=\log\frac{\bar{\pi}_T^{k_p}(v\mid c_t)}{\bar{\pi}_T^{k_p}(u\mid c_t)}+\gamma\!\left(\hat{V}_t(v)-\hat{V}_t(u)\right).
\label{eq:spot_pairwise_odds}
\end{equation}
This identity makes the division of roles explicit: the teacher supplies the prior odds, while the verifier adds a correction proportional to relative downstream value. Conditional on $t\in\mathcal{B}^{+}$, only value differences affect the target, making it invariant to uniform reward shifts. Together with the $\mathcal{B}^{+}$ gate, outcome feedback first tests absolute viability and then corrects relative preference. As $\gamma\to0$, the target recovers the teacher prior; larger $\gamma$ increasingly favors candidates with higher estimated continuation values.

Finally, we augment OPD with outcome-calibrated local supervision at verifier-supported positions:
\begin{equation}
\mathcal{L} = \frac{1}{T}\sum_{t=1}^{T} \mathcal{L}_t^{\mathrm{OPD}} + \frac{\beta}{\max\{1,|\mathcal{B}^{+}|\}}\sum_{t \in \mathcal{B}^{+}} \mathcal{L}_t^{\mathrm{Branch}},
\label{eq:spot_objective}
\end{equation}
where
\(\mathcal{L}_t^{\mathrm{Branch}}
=
-\sum_{v\in S_t^{k_p}}
\tilde{\pi}_T(v\mid c_t)\log\pi_\theta(v\mid c_t)\).
The denominator averages this auxiliary loss over retained positions, and
\(\beta\) controls its strength relative to trajectory-wide OPD.
When $\mathcal{B}^{+}=\varnothing$, the branch term vanishes and the objective reduces to standard OPD.

\vspace{-2mm}
\section{EXPERIMENTS}
\label{sec:experiments}
\vspace{-2mm}


For a rigorous, controlled comparison of \alg{}, we follow the setup of recent OPD work~\citep{jin2026entropyawareonpolicydistillationlanguage}. Further details appear in Appendix~\ref{app:more_experiment_details}.

\vspace{-1.5mm}
\subsection{Experimental Settings}
\label{subsec:exp_settings}
\vspace{-1.5mm}

\textbf{Models and Training Data.} We use Qwen3-8B, with thinking mode disabled, as the common teacher for all distillation methods, and Qwen3-0.6B-Base, Qwen3-1.7B-Base, and Qwen3-4B-Base as students \citep{yang2025qwen3technicalreport}. The 0.6B and 1.7B students are trained on MATH \citep{hendrycks2021measuringmathematicalproblemsolving}; the 4B student uses the more challenging DAPO dataset \citep{yu2025dapoopensourcellmreinforcement}.

\textbf{Baselines.} We compare \alg~with four baselines:
\vspace{-3mm}
{
\setlength{\leftmargini}{1.5em}
\begin{itemize}
    \item \textbf{KD} \citep{hinton2015distillingknowledgeneuralnetwork,kim2016sequencelevelknowledgedistillation}: off-policy distillation using forward KL and cross-entropy on teacher-generated data.
    \item \textbf{OPD} \citep{agarwal2024onpolicydistillationlanguagemodels,li2026rethinkingonpolicydistillationlarge}: on-policy training on student rollouts with per-token reverse-KL supervision.
    \item \textbf{GRPO} \citep{shao2024deepseekmathpushinglimitsmathematical}: outcome-based RL using group-relative advantages from verifiable rewards, without teacher supervision.
    \item \textbf{EOPD} \citep{jin2026entropyawareonpolicydistillationlanguage}: OPD augmented with a top-$k$ forward-KL term at positions selected by a fixed teacher-entropy threshold.
\end{itemize}
}
\textbf{Evaluation.} We evaluate all models zero-shot with the same prompt template and answer verifier on MATH-500 \citep{hendrycks2021measuringmathematicalproblemsolving,lightman2023letsverifystepstep}, AIME 2024 \citep{aime24}, AIME 2025 \citep{aime25}, AMC 2023 \citep{yang2024qwen2}, Minerva Math \citep{lewkowycz2022solvingquantitativereasoningproblems}, and HMMT 2025 \citep{balunović2026matharenaevaluatingllmsuncontaminated}. We sample eight responses per problem with temperature 1.0, top-$p$ sampling with $p=0.8$, and a maximum response length of 8192 tokens. \textbf{Avg@8} is the mean response accuracy, and \textbf{Pass@8} is the fraction of problems solved at least once. We report per-benchmark scores and their unweighted macro-average across the six benchmarks.

\vspace{-1mm}
\subsection{Main Results}
\vspace{-1mm}
\textbf{Consistent gains across student scales and training sets.}
As shown in Table~\ref{tab:main_results}, \alg{} achieves the best macro Pass@8 at all three evaluated student scales and the best or second-best macro Avg@8. Relative to standard OPD, \alg{} improves macro Avg@8 by 0.47--1.48 points and macro Pass@8 by 4.55--5.28 points. Relative to EOPD, the closest uncertainty-aware OPD baseline, \alg{} improves macro Avg@8 by 0.29--0.68 points and macro Pass@8 by 2.49--3.19 points. These gains persist from 0.6B to 4B students and across MATH and DAPO-Math-14k training, demonstrating that the benefit is not tied to a particular capacity or training regime.

\textbf{Broader multi-sample coverage without sacrificing average accuracy.}
Avg@8 estimates per-sample correctness, whereas Pass@8 measures whether eight samples reach at least one correct solution. The markedly larger Pass@8 gains, together with preserved or improved Avg@8, reveal a favorable coverage--quality profile: \alg{} places meaningful probability on viable alternatives without diluting an individual attempt. This directly matches \alg{}'s goal of improving coverage without sacrificing average accuracy. The comparison with EOPD is consistent with our motivating distinction: teacher entropy alone cannot determine whether uncertainty is concentrated in a compact candidate set, whether the student already represents those candidates, or which candidates lead to successful student continuations. \alg{} addresses these gaps by using normalized teacher entropy, top-$k_s$ probability mass, and student--teacher mismatch to allocate the probing budget, then using verifier-scored continuation values to construct outcome-calibrated local targets. 
\alg{} therefore selectively expands coverage toward useful alternative reasoning modes rather than stylistic or erroneous variation.

\vspace{-6pt}
\begin{table*}[t]
\centering
\definecolor{myskyblue}{HTML}{EBF3FC}
\colorlet{mylightgreen}{green!10}

\newcommand{\ourscell}[1]{\cellcolor{myskyblue}#1}
\newcommand{\summarycell}[1]{\cellcolor{mylightgreen}#1}

\caption{
Main results on six mathematical reasoning benchmarks.
\textit{Avg.} is the unweighted mean over the six benchmarks, computed before rounding.
\textbf{Bold} and \underline{underlined} entries denote the best and second-best results within each student setting, respectively.
}
\label{tab:main_results}

\renewcommand{\arraystretch}{1.2}

\resizebox{\textwidth}{!}{
\begin{tabular}{@{} l ccccccc ccccccc @{}}
\toprule
\multirow{2}{*}{Method} 
& \multicolumn{7}{c}{\textbf{Avg@8}} 
& \multicolumn{7}{c}{\textbf{Pass@8}} \\
\cmidrule(lr){2-8} \cmidrule(lr){9-15}
& MATH500 & AMC23 & Minerva & HMMT & AIME24 & AIME25 & Avg.
& MATH500 & AMC23 & Minerva & HMMT & AIME24 & AIME25 & Avg. \\
\midrule

\multicolumn{15}{c}{\textit{Student Model: \textbf{Qwen3-0.6B-Base}} (Training Data: MATH)} \\
\midrule
KD   
& 47.80 & 23.43 & 14.34 & 0.21 & 2.08 & 0.83 & \summarycell{14.78}
& 69.60 & 52.50 & 30.88 & 1.67 & 6.67 & 6.67 & \summarycell{28.00} \\

GRPO 
& \textbf{53.33} & \underline{28.13} & \textbf{16.41} & 1.04 & \underline{4.58} & 0.83 & \summarycell{\textbf{17.39}}
& 74.40 & 55.00 & 32.72 & 3.33 & \underline{10.00} & \underline{10.00} & \summarycell{30.91} \\

OPD  
& 50.10 & 24.69 & \underline{16.04} & 0.42 & 2.50 & \underline{1.25} & \summarycell{15.83}
& 73.20 & \underline{57.50} & 31.25 & 1.67 & \underline{10.00} & 6.67 & \summarycell{30.05} \\

EOPD 
& \underline{50.50} & 27.81 & 15.95 & \underline{1.46} & 4.17 & \underline{1.25} & \summarycell{16.86}
& \underline{75.00} & 55.00 & \underline{33.46} & \underline{5.00} & \textbf{13.33} & 6.67 & \summarycell{\underline{31.41}} \\

\ourscell{\textbf{SPOT (Ours)}} 
& \ourscell{\underline{50.50}}
& \ourscell{\textbf{28.44}} 
& \ourscell{15.99} 
& \ourscell{\textbf{1.88}} 
& \ourscell{\textbf{5.00}} 
& \ourscell{\textbf{2.08}} 
& \summarycell{\underline{17.31}}
& \ourscell{\textbf{76.40}}
& \ourscell{\textbf{60.00}} 
& \ourscell{\textbf{34.56}} 
& \ourscell{\textbf{6.67}} 
& \ourscell{\textbf{13.33}} 
& \ourscell{\textbf{16.67}} 
& \summarycell{\textbf{34.60}} \\

\midrule

\multicolumn{15}{c}{\textit{Student Model: \textbf{Qwen3-1.7B-Base}} (Training Data: MATH)} \\
\midrule
KD   
& 62.85 & 37.81 & 27.07 & 1.25 & 10.42 & 3.33 & \summarycell{23.79}
& 84.20 & 70.00 & 44.12 & \underline{3.33} & 20.00 & 16.67 & \summarycell{39.72} \\

GRPO 
& 67.00 & 38.13 & \underline{28.17} & \underline{1.88} & 9.17 & 5.42 & \summarycell{24.96}
& 84.00 & \underline{72.50} & \textbf{48.16} & \textbf{6.67} & 20.00 & 16.67 & \summarycell{41.33} \\

OPD  
& 67.03 & 39.06 & 27.62 & 1.46 & 8.33 & \underline{6.25} & \summarycell{24.96}
& \underline{84.80} & 70.00 & 47.06 & \underline{3.33} & 20.00 & 16.67 & \summarycell{40.31} \\

EOPD 
& \textbf{67.68} & \textbf{40.31} & 27.90 & 1.67 & \underline{10.83} & \textbf{7.50} & \summarycell{\underline{25.98}}
& 83.60 & \textbf{75.00} & 44.49 & \textbf{6.67} & \underline{26.67} & \underline{20.00} & \summarycell{\underline{42.74}} \\

\ourscell{\textbf{SPOT (Ours)}} 
& \ourscell{\underline{67.43}} 
& \ourscell{\underline{39.69}} 
& \ourscell{\textbf{28.45}} 
& \ourscell{\textbf{2.08}} 
& \ourscell{\textbf{12.50}} 
& \ourscell{\textbf{7.50}} 
& \summarycell{\textbf{26.27}}
& \ourscell{\textbf{87.80}} 
& \ourscell{\textbf{75.00}} 
& \ourscell{\underline{47.43}} 
& \ourscell{\textbf{6.67}} 
& \ourscell{\textbf{33.33}} 
& \ourscell{\textbf{23.33}} 
& \summarycell{\textbf{45.59}} \\

\midrule

\multicolumn{15}{c}{\textit{Student Model: \textbf{Qwen3-4B-Base}} (Training Data: DAPO-Math-14k)} \\
\midrule
KD   
& 74.73 & 48.13 & 34.65 & 3.13 & 12.50 & 12.08 & \summarycell{30.87}
& 92.20 & 80.00 & 51.47 & 13.33 & 26.67 & 23.33 & \summarycell{47.83} \\

GRPO 
& 79.18 & 53.13 & \textbf{39.84} & 2.71 & 14.17 & 13.33 & \summarycell{33.73}
& 90.00 & 80.00 & \underline{52.21} & 11.67 & 26.67 & 26.67 & \summarycell{47.87} \\

OPD  
& \textbf{80.38} & \underline{55.94} & 37.64 & 5.83 & \textbf{17.92} & 16.25 & \summarycell{\underline{35.66}}
& 93.00 & 80.00 & 51.10 & 13.33 & \underline{30.00} & 30.00 & \summarycell{49.57} \\

EOPD 
& 79.73 & 55.63 & 36.08 & \underline{7.08} & \underline{17.50} & \underline{16.67} & \summarycell{35.45}
& \underline{93.20} & \underline{82.50} & 51.84 & \underline{16.67} & \textbf{33.33} & \underline{33.33} & \summarycell{\underline{51.81}} \\

\ourscell{\textbf{SPOT (Ours)}} 
& \ourscell{\underline{79.88}} 
& \ourscell{\textbf{56.25}} 
& \ourscell{\underline{37.73}} 
& \ourscell{\textbf{7.50}} 
& \ourscell{\underline{17.50}} 
& \ourscell{\textbf{17.92}} 
& \summarycell{\textbf{36.13}}
& \ourscell{\textbf{93.40}} 
& \ourscell{\textbf{85.00}} 
& \ourscell{\textbf{54.04}} 
& \ourscell{\textbf{23.33}} 
& \ourscell{\textbf{33.33}} 
& \ourscell{\textbf{36.67}} 
& \summarycell{\textbf{54.30}} \\

\bottomrule
\end{tabular}
}
\end{table*}

\vspace{-0.5mm}
\subsection{Ablation Study}
\label{subsec:ablation}
\vspace{-1mm}

\subsubsection{Effect of the Branch-Acquisition Score}
\label{subsec:ablation_acquisition}
\vspace{-1mm}


Because verifier-based candidate probing is costly, \alg{} restricts it to the
top-$M$ positions. Here, \(H\) denotes normalized teacher entropy, \(C\)
denotes the teacher mass captured by its top-$k_s$ candidates, and \(G\)
denotes the student--teacher gap combining mass undercoverage and JS shape
mismatch. We compare five variants: \texttt{H} uses entropy alone;
\texttt{HC} adds \(C\); \texttt{HG} adds \(G\) but omits \(C\);
\texttt{HC-Mass} uses \(H\), \(C\), and only the mass-undercoverage term;
and \texttt{Full} uses all components. All runs share the teacher, data,
schedule, probing budget, and evaluation protocol.

\begin{table}[t]
\centering
\definecolor{scorefull}{HTML}{EEF5FC}
\caption{\textbf{Branch-acquisition score ablation.}
Avg@8 / Pass@8 (\%) for Qwen3-0.6B-Base trained on MATH. \textit{Macro}
denotes the unweighted mean; best results are bold and \textsc{Full} is shaded.}
\label{tab:ablation_acquisition}
\small
\setlength{\tabcolsep}{4pt}
\renewcommand{\arraystretch}{1.15}
\begin{tabular}{@{}lccccc@{}}
\toprule
& \multicolumn{5}{c}{\textbf{Avg@8 / Pass@8}} \\
\cmidrule(lr){2-6}
Acquisition score & MATH500 & AMC23 & AIME24 & AIME25 & Macro \\
\midrule
\texttt{H}: $\bar H_T$
& \underline{50.50} / \underline{76.20}
& 25.00 / \underline{60.00}
& 1.67 / 3.33
& 0.42 / 3.33
& 19.40 / 35.72 \\
\texttt{HC}: $\bar H_T C_t^{k_s}$
& 50.30 / 73.20
& 27.50 / 57.50
& 1.67 / 3.33
& 0.00 / 0.00
& 19.87 / 33.51 \\
\texttt{HG}: $\bar H_T G_t^{k_s}$
& 50.00 / 74.20
& \underline{28.13} / 55.00
& \underline{2.08} / \underline{10.00}
& \underline{1.25} / \underline{10.00}
& 20.37 / 37.30 \\
\texttt{HC-Mass}: $\bar H_T C_t^{k_s}(1-A_t^{k_s})$
& \best{50.75} / 75.80
& 27.81 / \best{62.50}
& \underline{2.08} / 6.67
& \underline{1.25} / 6.67
& \underline{20.47} / \underline{37.91} \\
\textbf{\texttt{Full}}: $\bar H_T C_t^{k_s}G_t^{k_s}$
& \underline{50.50} / \best{76.40}
& \best{28.44} / \underline{60.00}
& \best{5.00} / \best{13.33}
& \best{2.08} / \best{16.67}
& \best{21.51} / \best{41.60} \\
\bottomrule
\end{tabular}
\end{table}


Table~\ref{tab:ablation_acquisition} shows that \texttt{Full} achieves the
best macro Avg@8/Pass@8 (21.51/41.60), with its larger advantage in Pass@8
suggesting improved solution coverage. \texttt{HG} underperforms
\texttt{Full} on both metrics across all benchmarks, supporting \(C_t^{k_s}\)
as a reliable filter against diffuse teacher uncertainty. \texttt{HC}, which omits
the student--teacher gap, also consistently underperforms \texttt{Full}, supporting the inclusion of \(G_t^{k_s}\). \texttt{HC-Mass} is the
strongest partial variant but remains below \texttt{Full}, suggesting that
mass undercoverage and JS shape mismatch capture complementary student
deficits. Overall, the results support the multiplicative score as a soft
conjunction of teacher ambiguity, top-$k_s$ mass.

\vspace{-1.5mm}
\subsubsection{Effect of Verifier-Guided Calibration}
\label{subsec:ablation_verifier_guidance}
\vspace{-1.5mm}
Verifier-guided calibration converts the teacher prior into an outcome-aware
branch target. To isolate this component, we remove reward tilting and
positive-reward gating while retaining \textsc{Full} acquisition and the
branch-distillation objective. This ablation therefore uses the
uncalibrated teacher proposal prior at all probed positions.
All other training and evaluation settings
match the Qwen3-1.7B-Base main experiment.
Table~\ref{tab:ablation_verifier_guidance} shows that the full configuration
improves macro Avg@8/Pass@8 by 3.21/7.38 points. Pass@8 increases on all four
benchmarks, with the largest gain on AIME24 ($+13.33$). On MATH500, Avg@8
changes by only $+0.13$ while Pass@8 rises by $+2.00$, suggesting that
verifier-guided calibration primarily improves solution coverage in this
setting.

\begin{table}[t]
\centering
\definecolor{vgfull}{HTML}{EEF5FC}
\caption{\textbf{Verifier-guidance ablation.}
Avg@8 / Pass@8 (\%) for Qwen3-1.7B-Base trained on MATH. \textit{Macro}
denotes the unweighted mean; best results are bold and the full model is shaded.}
\label{tab:ablation_verifier_guidance}
\small
\setlength{\tabcolsep}{4pt}
\renewcommand{\arraystretch}{1.15}
\begin{tabular}{@{}lccccc@{}}
\toprule
& \multicolumn{5}{c}{\textbf{Avg@8 / Pass@8}} \\
\cmidrule(lr){2-6}
Variant & MATH500 & AMC23 & AIME24 & AIME25 & Macro \\
\midrule
w/o verifier guidance
& 67.30 / 85.80
& 34.06 / 67.50
& 8.33 / 20.00
& 4.58 / 16.67
& 28.57 / 47.49 \\
\rowcolor{vgfull}
\textbf{\alg{} (Full)}
& \best{67.43} / \best{87.80}
& \best{39.69} / \best{75.00}
& \best{12.50} / \best{33.33}
& \best{7.50} / \best{23.33}
& \best{31.78} / \best{54.87} \\
\bottomrule
\end{tabular}
\end{table}

\vspace{-1.5mm}
\subsubsection{Scaling with the Evaluation-Time Sampling Budget}
\label{subsec:ablation_metric_k}
\vspace{-1.5mm}

To test whether \alg{} improves coverage beyond a fixed Pass@8 protocol,
we vary $k\in\{4,8,16,32,64\}$ on AIME24, AIME25, and AMC23, holding
decoding fixed. Here, $k$ denotes responses per problem.
\begin{figure}[t]
    \centering
    \includegraphics[width=0.78\linewidth]{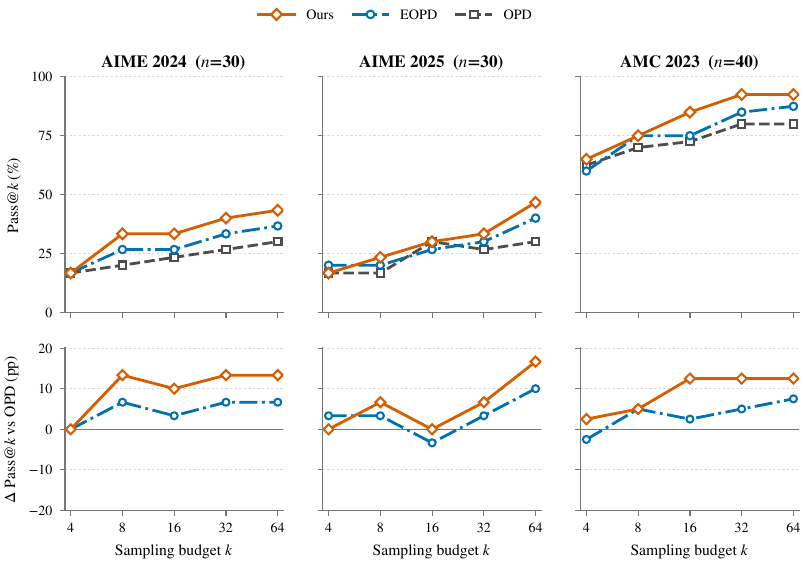}
    \vspace{-8pt}
    \caption{
    Pass@$k$ and its gain over OPD on AIME 2024, AIME 2025,
    and AMC 2023. Each point is read from the corresponding summary of a
    separately generated $k$-sample run.}
    \label{fig:ablation_metric_k}
\end{figure}
Figure~\ref{fig:ablation_metric_k} reveals a budget-dependent separation.
The methods are close at $k=4$, with EOPD leading on AIME25, whereas
\alg{} ranks first or ties from $k=8$ onward.
At $k=64$, \alg{} retains
12.50--16.67-point gains over OPD, showing that its coverage advantage
extends beyond Pass@8 and remains visible as the sampling budget grows.

\vspace{-1.5mm}
\subsubsection{Effect of Branch-Distillation Weight}
\label{subsec:ablation_beta}
\vspace{-1.5mm}

The coefficient $\beta$ trades off trajectory-wide OPD against
outcome-calibrated branch supervision. 
\begin{figure}[t]
    \centering
    \includegraphics[width=0.8\linewidth]{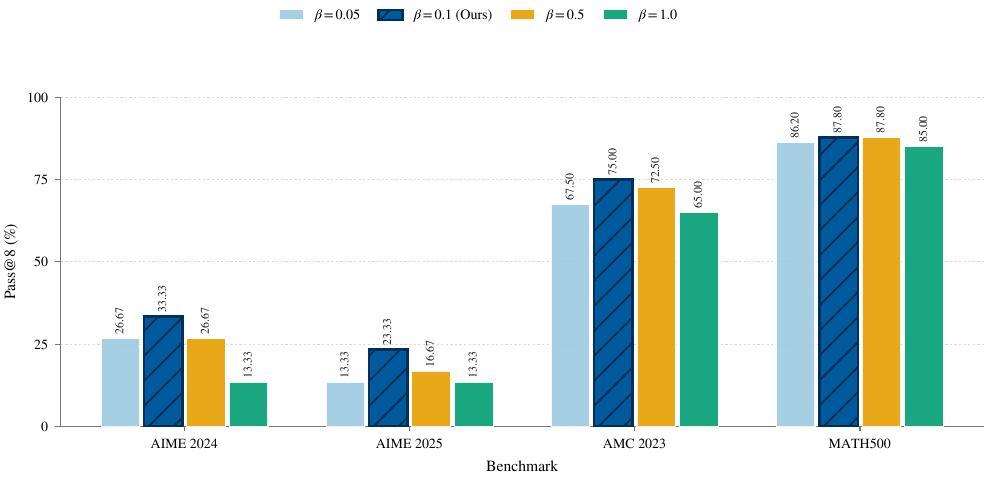}
    \caption{\textbf{Sensitivity to the branch-distillation weight.}
    Pass@8 of Qwen3-1.7B-Base for $\beta\in\{0.05,0.1,0.5,1.0\}$.
    The main configuration $\beta=0.1$ is marked as Ours.}
    \label{fig:ablation_beta}
\end{figure}
Among the tested values in Figure \ref{fig:ablation_beta}, $\beta=0.1$ leads the three competition benchmarks
and ties $\beta=0.5$ on MATH500. The competition benchmarks show greater
observed sensitivity, with their mean Pass@8 lowest at $\beta=1.0$. This
pattern is consistent with branch feedback serving as a calibrated correction
to trajectory-wide supervision: weaker weighting may underuse local outcome
signals, whereas stronger weighting may overemphasize sparse branch targets.
We therefore use $\beta=0.1$ as a
practical default.

\vspace{-1.5mm}
\subsubsection{Out-of-Domain Generalization}
\label{subsec:ood_generalization}
\vspace{-1.5mm}
Although trained exclusively on MATH, \alg{} generalizes beyond the training
distribution, with its clearest gains on deliberative reasoning while remaining
competitive on broader knowledge and instruction following (Appendix~
\ref{app:ood_generalization_results}, Table~\ref{tab:ood_results}). This pattern
suggests that outcome-calibrated branch supervision transfers reusable decision
behavior rather than merely fitting math-specific solution templates. The
non-uniform gains further indicate that branch calibration complements, rather
than replaces, token-level uncertainty transfer and domain-aligned supervision.

\vspace{-1.5mm}
\subsubsection{Consistency Across Model Families}
\label{subsec:ablation_model_family}
\vspace{-1.5mm}

The Llama results (Appendix~\ref{app:model_family_results},
Table~\ref{tab:ablation_model_family}) preserve the main Qwen pattern:
\alg{} leads both macro metrics, with a clearer advantage in Pass@8 than
Avg@8. Together with the Qwen results across model scales and training sets,
this supports the portability of outcome-calibrated branch supervision beyond
a single architecture and suggests that its primary benefit is consistently broader coverage
of viable reasoning paths. 

\vspace{-2mm}
\section{Related Work}
\vspace{-2mm}
\textbf{Knowledge Distillation.}
Knowledge distillation transfers capabilities from a high-capacity teacher to a compact student by matching teacher predictive distributions or imitating teacher-generated sequences
\citep{hinton2015distillingknowledgeneuralnetwork,kim2016sequencelevelknowledgedistillation}.
Recent work improves this paradigm through alternative divergence objectives, contrastive learning, and selective supervision based on sample difficulty or token importance
\citep{ko2024distillmstreamlineddistillationlarge,ko2025distillm2contrastiveapproachboosts,he2025kd,guo2025learningfocuscausalattention}.
Despite their effectiveness, these methods train on contexts that are not sampled from the current student policy, creating exposure bias and compounding errors when the student conditions on its own predictions at inference time
\citep{bengio2015scheduledsamplingsequenceprediction,ranzato2016sequenceleveltrainingrecurrent}.

\textbf{On-Policy Distillation for Reasoning Models.}
On-policy distillation trains students on self-generated rollouts with dense token-level teacher feedback, mitigating exposure bias and providing finer credit assignment than outcome-only RL
\citep{agarwal2024onpolicydistillationlanguagemodels,gu2026minillmonpolicydistillationlarge,lu2025onpolicydistillation}.
Recent work has broadened OPD along four directions:
teacher access and transfer scope, through reward extrapolation, black-box teachers, and multi-teacher distillation
\citep{yang2026learningteachergeneralizedonpolicy,ye2026blackboxonpolicydistillationlarge,hou2026uniopdunifyingonpolicydistillation};
adaptive supervision, through uncertainty- or disagreement-aware token selection, teachability modeling, and trajectory-aware guidance
\citep{jin2026entropyawareonpolicydistillationlanguage,xu2026tiptokenimportanceonpolicy,wang2026disagreementlearnabletokenteachability,jiang2026bridgingreasoningtrajectoriesonpolicy};
representation and training efficiency, through hidden-state alignment and offline teacher scoring
\citep{yang2026oprdonpolicyrepresentationdistillation,wu2026lightningopdefficientposttraining,ziheng2026moreearlystoppingrollout};
and self-distillation, where privileged or auxiliary context induces internal teachers for knowledge internalization
\citep{zhao2026selfdistilledreasoneronpolicyselfdistillation,ye2026onpolicycontextdistillationlanguage}.
Together, these advances have established OPD as a general post-training paradigm, with adoption in large-scale systems such as Qwen3 and DeepSeek-V4
\citep{yang2025qwen3technicalreport,deepseekai2026deepseekv4highlyefficientmilliontoken}.

\vspace{-2mm}
\section{Conclusion}
\vspace{-2mm}


Standard reverse-KL OPD can under-cover plausible alternatives, while teacher
entropy alone cannot determine where limited probing is most useful or which
candidates lead to successful student continuations. We introduced \alg{},
which uses normalized teacher entropy, top-$k_s$ probability mass, and
student--teacher mismatch to allocate a sparse probing budget, then uses
verifier-scored continuation values to construct KL-regularized,
outcome-calibrated targets anchored to the teacher proposal prior. Across the
evaluated student scales, training sets, reasoning benchmarks, and model
families, \alg{} achieves the highest macro Pass@8 in every setting and the
highest or second-highest macro Avg@8, with controlled ablations supporting
both selective acquisition and verifier guidance. 
Overall, our results support separating where to probe from what to distill: uncertainty and student mismatch guide probing, while downstream outcomes calibrate supervision.

\bibliography{iclr2026_conference}
\bibliographystyle{iclr2026_conference}

\appendix
\newpage
\section{Derivation and Properties of Outcome-Calibrated Targets}
\label{app:target_theory}

This appendix derives the outcome-calibrated target and establishes several properties of its value--prior tradeoff. Fix a retained position \(t\in\mathcal{B}^{+}\), and abbreviate
\[
\mathcal{S}=S_t^{k_p},\qquad
p(v)=\bar{\pi}_T^{k_p}(v\mid c_t),\qquad
V(v)=\hat V_t(v).
\]
Here, \(V\) is treated as the branch-value estimate obtained after probing. The candidate set \(\mathcal{S}\) is finite, and \(p(v)>0\) for every \(v\in\mathcal{S}\). We write \(\Delta(\mathcal{S})\) for the probability simplex over \(\mathcal{S}\).

\subsection{Closed-Form Derivation and Trust-Region Equivalence}
\label{app:target_derivation}

For any inverse-temperature parameter \(\gamma>0\), consider the KL-regularized problem
\begin{equation}
\max_{\rho\in\Delta(\mathcal{S})}
\left\{
\mathbb{E}_{v\sim\rho}[V(v)]
-\frac{1}{\gamma}D_{\mathrm{KL}}(\rho\|p)
\right\}.
\label{eq:app_regularized_target}
\end{equation}
The first term favors branches with high estimated downstream value, while the second anchors the target to the teacher prior.

\begin{proposition}[Closed-form optimizer]
\label{prop:closed_form_target}
The objective in Eq.~(\ref{eq:app_regularized_target}) has the unique optimizer
\begin{equation}
\rho_\gamma(v)
=
\frac{p(v)\exp\left(\gamma V(v)\right)}
{\sum_{u\in\mathcal{S}}p(u)\exp\left(\gamma V(u)\right)}.
\label{eq:app_closed_form_target}
\end{equation}
\end{proposition}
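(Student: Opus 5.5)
The cleanest route is the Gibbs variational identity. I will rewrite the objective in Eq.~(\ref{eq:app_regularized_target}) as a negative KL divergence to $\rho_\gamma$ plus a constant, and then use nonnegativity of KL together with its equality case. First I define the normalizer $Z_\gamma=\sum_{u\in\mathcal{S}}p(u)\exp(\gamma V(u))$. Since $\mathcal{S}$ is finite and $p>0$ on $\mathcal{S}$, we have $0<Z_\gamma<\infty$. Hence $\rho_\gamma$ in Eq.~(\ref{eq:app_closed_form_target}) is a well-defined element of $\Delta(\mathcal{S})$, and it has full support on $\mathcal{S}$. From its definition,
\[
\log\rho_\gamma(v)=\log p(v)+\gamma V(v)-\log Z_\gamma ,
\]
which gives
\[
V(v)=\tfrac{1}{\gamma}\bigl(\log\rho_\gamma(v)-\log p(v)+\log Z_\gamma\bigr).
\]

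Next I fix an arbitrary $\rho\in\Delta(\mathcal{S})$ and use the convention $0\log 0=0$, so that $D_{\mathrm{KL}}(\rho\|p)$ is finite because $p>0$. Substituting the expression for $V$ into $\mathbb{E}_{v\sim\rho}[V(v)]$ and subtracting $\frac{1}{\gamma}\sum_v\rho(v)\log\frac{\rho(v)}{p(v)}$, the $\log p$ terms cancel. This leaves
\[
\mathbb{E}_{\rho}[V]-\tfrac{1}{\gamma}D_{\mathrm{KL}}(\rho\|p)=\tfrac{1}{\gamma}\log Z_\gamma-\tfrac{1}{\gamma}D_{\mathrm{KL}}(\rho\|\rho_\gamma).
\]
The term $\log Z_\gamma$ comes out because $\sum_v\rho(v)=1$. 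The right-hand KL is finite because $\rho_\gamma$ has full support.

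Finally I invoke Gibbs' inequality, $D_{\mathrm{KL}}(\rho\|\rho_\gamma)\ge 0$, with equality if and only if $\rho=\rho_\gamma$; this is strict convexity of $-\log$ via Jensen. Since $\gamma>0$, the objective is therefore at most $\frac1\gamma\log Z_\gamma$, and this bound is attained exactly at $\rho_\gamma$. That gives both existence and uniqueness of the optimizer.

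The only real obstacle is bookkeeping for boundary points of the simplex, where $\rho(v)=0$ for some $v$. The identity above holds termwise under the $0\log0$ convention, so no separate argument is needed there. This also avoids a Lagrangian/KKT approach, which would require a separate justification that the maximizer is interior.
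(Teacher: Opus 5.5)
Your proposal is correct and follows essentially the same route as the paper: both rewrite the objective through the Gibbs variational identity as $\frac{1}{\gamma}\log Z(\gamma)-\frac{1}{\gamma}D_{\mathrm{KL}}(\rho\|\rho_\gamma)$ and then use nonnegativity of KL with equality only at $\rho=\rho_\gamma$. Your version just spells out some details the paper leaves implicit, namely that $Z$ is finite and positive and how boundary points of the simplex are handled under the $0\log 0$ convention.
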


\begin{proof}
Let \(Z(\gamma)=\sum_{u\in\mathcal{S}}p(u)\exp(\gamma V(u))\). For any \(\rho\in\Delta(\mathcal{S})\),
\begin{align}
D_{\mathrm{KL}}(\rho\|\rho_\gamma)
&=
\sum_{v\in\mathcal{S}}\rho(v)
\log\frac{\rho(v)Z(\gamma)}
{p(v)\exp(\gamma V(v))} \nonumber\\
&=
D_{\mathrm{KL}}(\rho\|p)
-\gamma\mathbb{E}_{\rho}[V]
+\log Z(\gamma).
\end{align}
Rearranging gives the Gibbs variational identity
\begin{equation}
\mathbb{E}_{\rho}[V]
-\frac{1}{\gamma}D_{\mathrm{KL}}(\rho\|p)
=
\frac{\log Z(\gamma)}{\gamma}
-\frac{1}{\gamma}D_{\mathrm{KL}}(\rho\|\rho_\gamma).
\label{eq:app_gibbs_identity}
\end{equation}
The right-hand side is maximized if and only if
\(D_{\mathrm{KL}}(\rho\|\rho_\gamma)=0\), or equivalently
\(\rho=\rho_\gamma\).
\end{proof}

The regularized and trust-region views describe the same solution path. Define the radius induced by \(\gamma\) as
\begin{equation}
\epsilon(\gamma)
=D_{\mathrm{KL}}(\rho_\gamma\|p).
\label{eq:app_induced_radius}
\end{equation}

\begin{proposition}[Trust-region equivalence]
\label{prop:trust_region_equivalence}
For every \(\gamma>0\), \(\rho_\gamma\) is the unique solution of
\begin{equation}
\max_{\rho\in\Delta(\mathcal{S})}\mathbb{E}_{\rho}[V]
\quad\textnormal{s.t.}\quad
D_{\mathrm{KL}}(\rho\|p)\leq\epsilon(\gamma).
\label{eq:app_trust_region}
\end{equation}
Conversely, if the KL constraint is active and its KKT multiplier is \(\eta>0\), the solution has the form in Eq.~(\ref{eq:app_closed_form_target}) with \(\gamma=1/\eta\).
\end{proposition}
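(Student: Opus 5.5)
The plan is to derive both directions from Proposition~\ref{prop:closed_form_target}, which gives for every \(\rho\in\Delta(\mathcal{S})\) the Gibbs identity Eq.~(\ref{eq:app_gibbs_identity}):
\[
\mathbb{E}_{\rho}[V]=\frac{1}{\gamma}D_{\mathrm{KL}}(\rho\|p)+\frac{\log Z(\gamma)}{\gamma}-\frac{1}{\gamma}D_{\mathrm{KL}}(\rho\|\rho_\gamma).
\]

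\emph{Forward direction.} Take any feasible \(\rho\), i.e.\ \(D_{\mathrm{KL}}(\rho\|p)\le\epsilon(\gamma)=D_{\mathrm{KL}}(\rho_\gamma\|p)\). Applying the identity to \(\rho\) and to \(\rho_\gamma\) and subtracting gives
\[
\mathbb{E}_{\rho_\gamma}[V]-\mathbb{E}_{\rho}[V]=\frac{1}{\gamma}\bigl(\epsilon(\gamma)-D_{\mathrm{KL}}(\rho\|p)\bigr)+\frac{1}{\gamma}D_{\mathrm{KL}}(\rho\|\rho_\gamma).
\]
Both terms are nonnegative for feasible \(\rho\), since \(\gamma>0\). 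Hence \(\rho_\gamma\) is optimal. Equality forces \(D_{\mathrm{KL}}(\rho\|\rho_\gamma)=0\), so \(\rho=\rho_\gamma\), which gives uniqueness. This is the standard Lagrangian-sufficiency argument. It needs no convexity machinery beyond the identity, and it holds even in the degenerate case where \(V\) is constant on \(\mathcal{S}\): then \(\rho_\gamma=p\) and \(\epsilon(\gamma)=0\), so the feasible set is the single point \(\{p\}\).

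\emph{Converse direction.} I would write the KKT conditions for the convex program in Eq.~(\ref{eq:app_trust_region}), whose objective is linear and whose constraint function \(D_{\mathrm{KL}}(\cdot\|p)\) is convex, together with the simplex constraints. The multiplier is \(\eta\) for the KL constraint, \(\lambda\) for \(\sum_v\rho(v)=1\), and \(\mu_v\ge0\) for \(\rho(v)\ge0\). Stationarity reads
\[
V(v)-\eta\bigl(\log(\rho(v)/p(v))+1\bigr)-\lambda+\mu_v=0 .
\]

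The main obstacle is the boundary of the simplex, because \(\log\rho(v)\) is undefined where \(\rho(v)=0\). I would handle it as follows. At a point with \(\rho(v)=0\) and \(p(v)>0\), the directional derivative of \(D_{\mathrm{KL}}(\cdot\|p)\) toward the interior is \(-\infty\). Moving a small amount of mass onto \(v\) therefore strictly decreases the KL term at first order, while changing the linear objective only at first order. With \(\eta>0\), this contradicts optimality of the Lagrangian \(\mathbb{E}_\rho[V]-\eta D_{\mathrm{KL}}(\rho\|p)\), which the KKT point maximizes over the simplex.

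Once the optimum is known to be interior, complementary slackness gives \(\mu_v=0\). Stationarity then yields
\[
\log\rho(v)=\log p(v)+V(v)/\eta+\text{const}.
\]
Normalizing gives Eq.~(\ref{eq:app_closed_form_target}) with \(\gamma=1/\eta\).

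Alternatively, I would note that the KKT point maximizes \(\mathbb{E}_\rho[V]-\eta D_{\mathrm{KL}}(\rho\|p)\) over \(\Delta(\mathcal{S})\). Proposition~\ref{prop:closed_form_target} with \(\gamma=1/\eta\) then identifies this maximizer directly. This route bypasses the boundary issue entirely, and I would present it as the main argument.
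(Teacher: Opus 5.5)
Your proof is correct. The forward direction is the paper's argument in quantitative form. The paper argues by contradiction: a feasible $\rho$ with $\mathbb{E}_\rho[V]\ge\mathbb{E}_{\rho_\gamma}[V]$ would score at least as well in the regularized objective, contradicting Proposition~\ref{prop:closed_form_target} (strictly, or via its uniqueness in the tie case). Your identity $\mathbb{E}_{\rho_\gamma}[V]-\mathbb{E}_{\rho}[V]=\frac{1}{\gamma}\bigl(\epsilon(\gamma)-D_{\mathrm{KL}}(\rho\|p)\bigr)+\frac{1}{\gamma}D_{\mathrm{KL}}(\rho\|\rho_\gamma)$ instead delivers optimality and uniqueness in one step.

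The converse is where you genuinely differ. The paper writes the stationarity equation $V(v)-\eta\bigl(\log(\rho(v)/p(v))+1\bigr)+\lambda=0$ and normalizes, tacitly assuming $\rho(v)>0$ so that the logarithm is defined. Your main route notes instead that a KKT point with multiplier $\eta>0$ maximizes $\mathbb{E}_\rho[V]-\eta D_{\mathrm{KL}}(\rho\|p)$ over $\Delta(\mathcal{S})$. That is exactly the regularized objective with $\gamma=1/\eta$, so Proposition~\ref{prop:closed_form_target} identifies the solution, and its uniqueness, with no boundary analysis.

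The load-bearing step is this Lagrangian-maximization property, so say which notion of KKT multiplier you use. In the saddle-point sense, which is standard for this convex program, the property is the definition. In the classical differentiable sense, the conditions only make sense where every $\rho(v)>0$, since the partial derivative of the KL term is $-\infty$ at a zero coordinate. Concavity of the Lagrangian then upgrades stationarity at that interior point to global maximality over the simplex.

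Your boundary paragraph appeals to the same maximization property. It is therefore redundant with your main route, not an independent justification.
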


\begin{proof}
Suppose a feasible \(\rho\) attained a larger expected value than \(\rho_\gamma\). Since
\(D_{\mathrm{KL}}(\rho\|p)\leq
D_{\mathrm{KL}}(\rho_\gamma\|p)\), it would also attain a strictly larger objective in Eq.~(\ref{eq:app_regularized_target}), contradicting Proposition~\ref{prop:closed_form_target}. The same argument, together with the uniqueness of the regularized optimizer, excludes any distinct feasible optimizer with equal expected value.

For the converse direction, introduce multipliers \(\eta\geq0\) and \(\lambda\in\mathbb{R}\). The Lagrangian for Eq.~(\ref{eq:app_trust_region}) is
\[
\mathcal{J}(\rho,\eta,\lambda)
=\mathbb{E}_{\rho}[V]
-\eta\left(D_{\mathrm{KL}}(\rho\|p)-\epsilon\right)
+\lambda\left(\sum_{v\in\mathcal{S}}\rho(v)-1\right).
\]
When the constraint is active, stationarity with respect to \(\rho(v)\) yields
\[
V(v)-\eta\left(\log\frac{\rho(v)}{p(v)}+1\right)+\lambda=0.
\]
Normalizing the resulting probabilities gives
\(\rho(v)\propto p(v)\exp(V(v)/\eta)\), which is
Eq.~(\ref{eq:app_closed_form_target}) with \(\gamma=1/\eta\).
\end{proof}

Thus, a shared \(\gamma\) parameterizes the regularized solution directly and induces a position-dependent trust-region radius
\(\epsilon_t(\gamma)=D_{\mathrm{KL}}\left(\rho_{\gamma,t}\mathbin{\Vert}\bar{\pi}_T^{k_p}(\cdot\mid c_t)\right)\).
For a prescribed active radius, the corresponding inverse temperature is the reciprocal of the KL multiplier rather than the multiplier itself.

\subsection{Geometry of the Calibration Path}
\label{app:calibration_geometry}

The log-partition function provides a compact characterization of how \(\gamma\) trades off estimated branch value against deviation from the teacher.
Let
\[
F(\gamma)=\log Z(\gamma)
=\log\sum_{v\in\mathcal{S}}p(v)\exp(\gamma V(v)).
\]

\begin{proposition}[Monotone value--prior tradeoff]
\label{prop:monotone_tradeoff}
Along the calibration path \(\{\rho_\gamma\}_{\gamma\geq0}\), with \(\rho_0=p\) defined by continuity,
\begin{align}
\frac{\mathrm{d}}{\mathrm{d}\gamma}
\mathbb{E}_{\rho_\gamma}[V]
&=\operatorname{Var}_{\rho_\gamma}(V)\geq0,
\label{eq:app_value_derivative}\\
\frac{\mathrm{d}}{\mathrm{d}\gamma}
D_{\mathrm{KL}}(\rho_\gamma\|p)
&=\gamma\operatorname{Var}_{\rho_\gamma}(V)\geq0.
\label{eq:app_kl_derivative}
\end{align}
If \(V\) is nonconstant on \(\mathcal{S}\), both quantities are strictly increasing for \(\gamma>0\).
\end{proposition}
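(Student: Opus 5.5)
The plan is to work entirely through the log-partition function \(F(\gamma)=\log Z(\gamma)\), which is finite and smooth (indeed real-analytic) on all of \(\mathbb{R}\) because \(\mathcal{S}\) is finite and \(p(v)>0\). We treat \(\rho_\gamma\) as defined by Eq.~(\ref{eq:app_closed_form_target}) for every real \(\gamma\). At \(\gamma=0\) this formula gives exactly \(p\), so the continuity convention \(\rho_0=p\) is automatic and every quantity below is smooth on \([0,\infty)\).

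\textbf{Step 1: cumulant identities.} Differentiating under the finite sum gives
\[
F'(\gamma)=\frac{\sum_v p(v)V(v)e^{\gamma V(v)}}{Z(\gamma)}=\mathbb{E}_{\rho_\gamma}[V].
\]
Differentiating once more, using the quotient rule, gives
\[
F''(\gamma)=\mathbb{E}_{\rho_\gamma}[V^2]-\bigl(\mathbb{E}_{\rho_\gamma}[V]\bigr)^2=\operatorname{Var}_{\rho_\gamma}(V).
\]
Eq.~(\ref{eq:app_value_derivative}) follows at once, since \(\frac{\mathrm{d}}{\mathrm{d}\gamma}\mathbb{E}_{\rho_\gamma}[V]=F''(\gamma)\).

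\textbf{Step 2: KL along the path.} Since \(\log\bigl(\rho_\gamma(v)/p(v)\bigr)=\gamma V(v)-F(\gamma)\), we get
\[
D_{\mathrm{KL}}(\rho_\gamma\|p)=\gamma F'(\gamma)-F(\gamma).
\]
Differentiating gives \(F'(\gamma)+\gamma F''(\gamma)-F'(\gamma)=\gamma F''(\gamma)\), which is Eq.~(\ref{eq:app_kl_derivative}). Both derivatives are nonnegative for \(\gamma\ge0\) because a variance is nonnegative.

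\textbf{Step 3: strictness.} This is the only point needing care. Every \(\rho_\gamma\) has full support on \(\mathcal{S}\), since \(p(v)>0\) and the exponential factor is positive. So if \(V\) is nonconstant on \(\mathcal{S}\), then \(\operatorname{Var}_{\rho_\gamma}(V)>0\) for every \(\gamma\). Consequently the value derivative is strictly positive for all \(\gamma\), and the KL derivative \(\gamma\operatorname{Var}_{\rho_\gamma}(V)\) is strictly positive for \(\gamma>0\). A function whose derivative is strictly positive on an interval is strictly increasing there, and this gives the strict monotonicity claim.

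There is no real obstacle here. The main things to watch are to justify smoothness at \(\gamma=0\), which holds because the formula is analytic there, and to note that the KL derivative vanishes at \(\gamma=0\). This does not affect strict increase on \((0,\infty)\), or even on \([0,\infty)\).
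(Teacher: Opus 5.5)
Your proof is correct and follows the paper's argument essentially step for step. Both use the cumulant identities \(F'(\gamma)=\mathbb{E}_{\rho_\gamma}[V]\) and \(F''(\gamma)=\operatorname{Var}_{\rho_\gamma}(V)\), the identity \(D_{\mathrm{KL}}(\rho_\gamma\|p)=\gamma F'(\gamma)-F(\gamma)\), and full support of \(\rho_\gamma\) to get strictness; your extra remarks on smoothness at \(\gamma=0\), and on strict increase persisting on \([0,\infty)\) even though the KL derivative vanishes at the endpoint, are valid refinements.
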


\begin{proof}
Differentiating \(F\) gives
\[
F^{\prime}(\gamma)=\mathbb{E}_{\rho_\gamma}[V],
\qquad
F^{\prime\prime}(\gamma)=\operatorname{Var}_{\rho_\gamma}(V).
\]
Moreover, Eq.~(\ref{eq:app_closed_form_target}) implies
\[
D_{\mathrm{KL}}(\rho_\gamma\|p)
=\gamma F^{\prime}(\gamma)-F(\gamma).
\]
Differentiating this identity proves
Eqs.~(\ref{eq:app_value_derivative})--(\ref{eq:app_kl_derivative}).
Because \(p\) and hence \(\rho_\gamma\) have full support on
\(\mathcal{S}\), the variance is positive whenever \(V\) is nonconstant.
\end{proof}

This monotonicity makes the role of \(\gamma\) precise: increasing it yields a target with higher estimated downstream value while moving it farther from the teacher prior. It also yields a one-to-one correspondence between \(\gamma\) and the induced radius over the nondegenerate portion of the path. Specifically, let
\(\mathcal{A}^{\star}=\arg\max_{v\in\mathcal{S}}V(v)\) and
\(P^{\star}=\sum_{v\in\mathcal{A}^{\star}}p(v)\). Then
\begin{equation}
\lim_{\gamma\to0}\rho_\gamma=p,\qquad
\lim_{\gamma\to\infty}\rho_\gamma(v)
=
\begin{cases}
p(v)/P^{\star}, & v\in\mathcal{A}^{\star},\\
0, & v\notin\mathcal{A}^{\star},
\end{cases}
\label{eq:app_target_limits}
\end{equation}
and the induced radius increases from \(0\) to
\(-\log P^{\star}\). Notably, when several candidates share the maximum value, the large-\(\gamma\) limit preserves their relative teacher probabilities rather than selecting an arbitrary one.

\begin{corollary}[Diminishing returns of KL budget]
\label{cor:diminishing_returns}
Assume that \(V\) is nonconstant, and let
\[
G(\epsilon)=\max_{\rho:\,D_{\mathrm{KL}}(\rho\|p)\leq\epsilon}
\mathbb{E}_{\rho}[V]
\]
for \(0<\epsilon<-\log P^{\star}\). If \(\gamma(\epsilon)\) denotes the unique inverse temperature inducing radius \(\epsilon\), then
\begin{equation}
G^{\prime}(\epsilon)=\frac{1}{\gamma(\epsilon)},
\qquad
G^{\prime\prime}(\epsilon)
=-\frac{1}{\gamma(\epsilon)^3
\operatorname{Var}_{\rho_{\gamma(\epsilon)}}(V)}<0.
\label{eq:app_diminishing_returns}
\end{equation}
Thus, relaxing the teacher-centered KL budget improves the optimal estimated branch value, but with strictly diminishing marginal returns.
\end{corollary}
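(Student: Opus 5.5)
The plan is to parametrize the value function \(G\) by \(\gamma\), using the trust-region equivalence (Proposition~\ref{prop:trust_region_equivalence}) and the monotone tradeoff (Proposition~\ref{prop:monotone_tradeoff}), and then differentiate by the chain rule.

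First I establish that \(\epsilon\mapsto\gamma(\epsilon)\) is well defined and smooth on \((0,-\log P^{\star})\). The map \(\gamma\mapsto\epsilon(\gamma)=D_{\mathrm{KL}}(\rho_\gamma\|p)\) is \(C^\infty\) on \((0,\infty)\), since it equals \(\gamma F'(\gamma)-F(\gamma)\) and \(F\) is a log-sum-exp. By Eq.~(\ref{eq:app_kl_derivative}), its derivative is \(\gamma\operatorname{Var}_{\rho_\gamma}(V)\). This is strictly positive for \(\gamma>0\) because \(V\) is nonconstant and \(\rho_\gamma\) has full support.

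Together with the limits \(\epsilon(0^+)=0\) and \(\epsilon(\infty)=-\log P^{\star}\) stated after Eq.~(\ref{eq:app_target_limits}), this makes \(\epsilon(\cdot)\) a diffeomorphism of \((0,\infty)\) onto \((0,-\log P^{\star})\). The inverse \(\gamma(\epsilon)\) is then smooth by the inverse function theorem, with
\[
\gamma'(\epsilon)=\frac{1}{\gamma\operatorname{Var}_{\rho_\gamma}(V)}.
\]
The limit \(\epsilon(\infty)=-\log P^{\star}\) is the one step I would double-check. It follows from \(\rho_\gamma\to p(\cdot)\mathbb{1}_{\mathcal{A}^\star}/P^\star\) and continuity of KL on the simplex.

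Next, Proposition~\ref{prop:trust_region_equivalence} gives \(G(\epsilon)=\mathbb{E}_{\rho_{\gamma(\epsilon)}}[V]=F'(\gamma(\epsilon))\). By the chain rule and Eq.~(\ref{eq:app_value_derivative}),
\[
G'(\epsilon)=F''(\gamma)\,\gamma'(\epsilon)=\frac{\operatorname{Var}_{\rho_\gamma}(V)}{\gamma\operatorname{Var}_{\rho_\gamma}(V)}=\frac{1}{\gamma(\epsilon)}.
\]
Differentiating once more gives
\[
G''(\epsilon)=-\frac{\gamma'(\epsilon)}{\gamma(\epsilon)^2}=-\frac{1}{\gamma(\epsilon)^3\operatorname{Var}_{\rho_{\gamma(\epsilon)}}(V)},
\]
which is negative because both the variance and \(\gamma\) are positive.

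The main obstacle is not the calculus but verifying that \(G\) is genuinely given by \(\rho_{\gamma(\epsilon)}\) for every \(\epsilon\) in the range. This is exactly what the surjectivity of \(\epsilon(\cdot)\) onto \((0,-\log P^{\star})\) secures, and so it rests on the strict monotonicity and limit arguments from the first step. As a consistency check, \(G'=1/\gamma\) agrees with the envelope theorem, since the KKT multiplier is \(\eta=1/\gamma\).
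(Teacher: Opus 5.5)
Your proposal is correct and follows the paper's proof. Both arguments write $G(\epsilon(\gamma))=\mathbb{E}_{\rho_\gamma}[V]$ via the trust-region equivalence, then apply the chain rule with $\mathrm{d}G/\mathrm{d}\gamma=\operatorname{Var}_{\rho_\gamma}(V)$ and $\mathrm{d}\epsilon/\mathrm{d}\gamma=\gamma\operatorname{Var}_{\rho_\gamma}(V)$ from Proposition~\ref{prop:monotone_tradeoff}; your additional step, showing that $\epsilon(\cdot)$ is a smooth bijection of $(0,\infty)$ onto $(0,-\log P^{\star})$, makes explicit the well-definedness and differentiability of $\gamma(\epsilon)$ that the paper takes for granted.
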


\begin{proof}
Along the calibration path,
\(G(\epsilon(\gamma))=\mathbb{E}_{\rho_\gamma}[V]\).
Proposition~\ref{prop:monotone_tradeoff} gives
\(\mathrm{d}G/\mathrm{d}\gamma=
\operatorname{Var}_{\rho_\gamma}(V)\) and
\(\mathrm{d}\epsilon/\mathrm{d}\gamma=
\gamma\operatorname{Var}_{\rho_\gamma}(V)\).
Applying the chain rule once gives
\(G^{\prime}(\epsilon)=1/\gamma\); differentiating once more yields Eq.~(\ref{eq:app_diminishing_returns}).
\end{proof}

\begin{corollary}[Finite-temperature teacher anchoring]
\label{cor:teacher_anchoring}
Let \(\Delta_V=\max_vV(v)-\min_vV(v)\). For every finite
\(\gamma>0\) and \(v\in\mathcal{S}\),
\begin{equation}
\exp(-\gamma\Delta_V)
\leq\frac{\rho_\gamma(v)}{p(v)}
\leq\exp(\gamma\Delta_V).
\label{eq:app_density_ratio_bound}
\end{equation}
Consequently, calibration preserves the teacher top-\(k_p\) support at every finite temperature.
\end{corollary}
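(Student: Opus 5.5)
The plan is to work directly from the closed form in Proposition~\ref{prop:closed_form_target}. For each \(v\in\mathcal{S}\) the ratio is
\[
\frac{\rho_\gamma(v)}{p(v)}=\frac{\exp(\gamma V(v))}{Z(\gamma)},\qquad Z(\gamma)=\sum_{u\in\mathcal{S}}p(u)\exp(\gamma V(u)).
\]
The problem therefore reduces to bounding the normalizer \(Z(\gamma)\) relative to the numerator \(\exp(\gamma V(v))\).

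Write \(V_{\max}=\max_u V(u)\) and \(V_{\min}=\min_u V(u)\). Because \(p\) is a probability distribution on \(\mathcal{S}\) and \(\gamma>0\), \(Z(\gamma)\) is a \(p\)-weighted average of the values \(\exp(\gamma V(u))\). Hence
\[
\exp(\gamma V_{\min})\le Z(\gamma)\le \exp(\gamma V_{\max}).
\]

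Combining this with \(\exp(\gamma V_{\min})\le \exp(\gamma V(v))\le \exp(\gamma V_{\max})\) gives both directions. For the upper bound, divide the largest possible numerator by the smallest possible normalizer: \(\rho_\gamma(v)/p(v)\le \exp(\gamma V_{\max})/\exp(\gamma V_{\min})=\exp(\gamma\Delta_V)\). For the lower bound, divide the smallest numerator by the largest normalizer: \(\rho_\gamma(v)/p(v)\ge\exp(\gamma(V_{\min}-V_{\max}))=\exp(-\gamma\Delta_V)\). This establishes Eq.~(\ref{eq:app_density_ratio_bound}).

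The support claim then follows from the lower bound. Multiplying by \(p(v)\) gives \(\rho_\gamma(v)\ge p(v)\exp(-\gamma\Delta_V)\). This is strictly positive, since \(p(v)>0\) on \(\mathcal{S}\) by the standing assumption and \(\gamma\Delta_V<\infty\) for finite \(\gamma\). So \(\rho_\gamma\) has full support on the teacher top-\(k_p\) set for every finite \(\gamma\).

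This is consistent with Eq.~(\ref{eq:app_target_limits}), where support loss happens only in the limit \(\gamma\to\infty\), exactly when the bound degenerates.

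No step is a real obstacle. The only point needing care is to recognize \(Z(\gamma)\) as an average under \(p\), so that it sits between the extreme exponentials, rather than bounding it crudely by \(|\mathcal{S}|\exp(\gamma V_{\max})\), which would introduce a spurious factor of \(|\mathcal{S}|\).
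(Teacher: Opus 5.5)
Your proposal is correct and follows essentially the same route as the paper: write \(\rho_\gamma(v)/p(v)=\exp(\gamma V(v))/Z(\gamma)\), sandwich \(Z(\gamma)\) between \(\exp(\gamma\min_u V(u))\) and \(\exp(\gamma\max_u V(u))\) because it is a \(p\)-weighted average, and divide. Your explicit deduction of full support from the lower bound together with \(p(v)>0\) is a faithful expansion of the paper's one-line ``consequently.''
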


\begin{proof}
Since
\(\exp(\gamma\min_vV(v))\leq Z(\gamma)
\leq\exp(\gamma\max_vV(v))\), the result follows directly from
\(\rho_\gamma(v)/p(v)=\exp(\gamma V(v))/Z(\gamma)\).
\end{proof}

\subsection{Estimated-Value Improvement under Teacher Anchoring}
\label{app:value_improvement}

\begin{proposition}[Improvement--deviation bounds]
\label{prop:value_improvement}
The outcome-calibrated target satisfies
\begin{equation}
\mathbb{E}_{\rho_\gamma}[V]-\mathbb{E}_{p}[V]
\geq
\frac{1}{\gamma}D_{\mathrm{KL}}(\rho_\gamma\|p)
\geq0.
\label{eq:app_value_improvement}
\end{equation}
Moreover, with
\(\epsilon(\gamma)=D_{\mathrm{KL}}(\rho_\gamma\|p)\),
\begin{equation}
0\leq
\mathbb{E}_{\rho_\gamma}[V]-\mathbb{E}_{p}[V]
\leq
\Delta_V\sqrt{\frac{\epsilon(\gamma)}{2}}.
\label{eq:app_value_upper_bound}
\end{equation}
\end{proposition}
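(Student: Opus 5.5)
The lower bound in Eq.~(\ref{eq:app_value_improvement}) follows from the optimality of \(\rho_\gamma\) in Proposition~\ref{prop:closed_form_target}. I will compare the regularized objective of Eq.~(\ref{eq:app_regularized_target}) at \(\rho_\gamma\) with its value at the feasible point \(\rho=p\). Since \(D_{\mathrm{KL}}(p\|p)=0\), optimality gives
\[
\mathbb{E}_{\rho_\gamma}[V]-\frac{1}{\gamma}D_{\mathrm{KL}}(\rho_\gamma\|p)\geq\mathbb{E}_{p}[V].
\]
Rearranging yields the first inequality. The second inequality \(\frac{1}{\gamma}D_{\mathrm{KL}}(\rho_\gamma\|p)\ge0\) holds because KL divergence is nonnegative and \(\gamma>0\).

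An alternative route is to integrate the derivatives in Proposition~\ref{prop:monotone_tradeoff}. With \(\gamma\) fixed and \(s\) as the integration variable, \(\mathbb{E}_{\rho_\gamma}[V]-\mathbb{E}_p[V]=\int_0^\gamma\operatorname{Var}_{\rho_s}(V)\,ds\) and \(D_{\mathrm{KL}}(\rho_\gamma\|p)=\int_0^\gamma s\operatorname{Var}_{\rho_s}(V)\,ds\). Since \(s\le\gamma\) on the range of integration, the second integral is at most \(\gamma\) times the first, which gives the same bound. I will use the variational argument as the main proof, since it is a single line.

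For the upper bound in Eq.~(\ref{eq:app_value_upper_bound}), the lower end \(0\) is already established. For the upper end, I will write the difference as \(\sum_v(\rho_\gamma(v)-p(v))V(v)\). Because \(\rho_\gamma\) and \(p\) both sum to one, I can subtract the constant \(c=\tfrac12(\max_v V+\min_v V)\) from \(V\) without changing the sum. The shifted function satisfies \(|V-c|\le\Delta_V/2\), so Hölder's inequality gives
\[
\bigl|\mathbb{E}_{\rho_\gamma}[V]-\mathbb{E}_p[V]\bigr|\le\tfrac{\Delta_V}{2}\|\rho_\gamma-p\|_1=\Delta_V\,\mathrm{TV}(\rho_\gamma,p).
\]
Pinsker's inequality \(\mathrm{TV}(\rho_\gamma,p)\le\sqrt{D_{\mathrm{KL}}(\rho_\gamma\|p)/2}\) then gives \(\Delta_V\sqrt{\epsilon(\gamma)/2}\).

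The only delicate point is the constant. The recentering step must use the oscillation \(\Delta_V\) rather than \(2\max|V|\) to obtain the stated factor. This works because \(\|\rho_\gamma-p\|_1=2\,\mathrm{TV}(\rho_\gamma,p)\), so the factor of \(2\) from the \(\ell_1\) norm cancels the \(\tfrac12\) from recentering. No further difficulty is expected.
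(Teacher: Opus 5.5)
Your proposal is correct and follows the paper's proof. The lower bound comes from comparing the regularized objective at \(\rho_\gamma\) against the feasible comparator \(p\), and the upper bound from bounding the expectation gap by \(\Delta_V\,\mathrm{TV}(\rho_\gamma,p)\) and then applying Pinsker's inequality; your recentering by \(c=\tfrac12(\max_v V(v)+\min_v V(v))\) makes explicit why the oscillation \(\Delta_V\), rather than \(2\max_v|V(v)|\), is the right constant, a step the paper states without justification.
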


\begin{proof}
Optimality of \(\rho_\gamma\) in
Eq.~(\ref{eq:app_regularized_target}), using the teacher prior \(p\) as a feasible comparator, gives
\[
\mathbb{E}_{\rho_\gamma}[V]
-\frac{1}{\gamma}D_{\mathrm{KL}}(\rho_\gamma\|p)
\geq\mathbb{E}_{p}[V],
\]
which proves Eq.~(\ref{eq:app_value_improvement}). For the upper bound, the expectation difference is at most
\(\Delta_V\,\mathrm{TV}(\rho_\gamma,p)\). Pinsker inequality gives
\(\mathrm{TV}(\rho_\gamma,p)\leq
\sqrt{D_{\mathrm{KL}}(\rho_\gamma\|p)/2}\), yielding
Eq.~(\ref{eq:app_value_upper_bound}).
\end{proof}

Equation~(\ref{eq:app_value_improvement}) formalizes the benefit of the exponential tilt, whereas Eq.~(\ref{eq:app_value_upper_bound}) formalizes its conservatism. The guarantee concerns the probed estimate \(V=\hat V_t\); it does not by itself assert improvement in the unknown population branch value.

\subsection{Preference Correction and the Binary-Verifier Case}
\label{app:binary_verifier}

The target changes a teacher preference only when the outcome advantage is large enough to overcome the prior log-odds. In particular, for candidates \(u,v\in\mathcal{S}\),
\begin{equation}
\rho_\gamma(v)>\rho_\gamma(u)
\quad\Longleftrightarrow\quad
\gamma\bigl(V(v)-V(u)\bigr)
>
\log\frac{p(u)}{p(v)}.
\label{eq:app_preference_reversal}
\end{equation}
Thus, \(\gamma\) sets an explicit evidence threshold for reversing a teacher ranking. Candidates with equal values preserve their teacher-relative odds, and adding the same constant to all branch values leaves the target unchanged.

The current training verifier is binary and uses one probe rollout per candidate, so the realized branch-value estimates lie in \(\{0,1\}\). This case admits a particularly direct interpretation.

\begin{proposition}[Binary-verifier odds update]
\label{prop:binary_verifier}
Let
\(\mathcal{S}^{+}=\{v\in\mathcal{S}:V(v)=1\}\), and define the prior and calibrated probability masses on successful branches as
\[
P_{+}=\sum_{v\in\mathcal{S}^{+}}p(v),
\qquad
Q_{+}=\sum_{v\in\mathcal{S}^{+}}\rho_\gamma(v).
\]
Then
\begin{equation}
Q_{+}
=\frac{e^\gamma P_{+}}{1-P_{+}+e^\gamma P_{+}},
\qquad
\frac{Q_{+}}{1-Q_{+}}
=e^\gamma\frac{P_{+}}{1-P_{+}},
\label{eq:app_binary_odds}
\end{equation}
Moreover,
\begin{equation}
D_{\mathrm{KL}}(\rho_\gamma\|p)
=
D_{\mathrm{KL}}\left(
\operatorname{Bern}(Q_{+})\,\|\,
\operatorname{Bern}(P_{+})
\right).
\label{eq:app_binary_kl}
\end{equation}
These identities hold whenever \(0<P_{+}<1\). Within either outcome group, the target preserves the teacher-relative probabilities.
\end{proposition}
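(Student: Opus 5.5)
The plan is to substitute the binary values directly into the closed form of Proposition~\ref{prop:closed_form_target}. Since \(V(v)\in\{0,1\}\), the partition function splits over the two outcome groups:
\[
Z(\gamma)=\sum_{v\in\mathcal{S}^{+}}p(v)e^{\gamma}+\sum_{v\notin\mathcal{S}^{+}}p(v)=e^{\gamma}P_{+}+(1-P_{+}).
\]
Consequently \(\rho_\gamma(v)=e^{\gamma}p(v)/Z(\gamma)\) for \(v\in\mathcal{S}^{+}\), and \(\rho_\gamma(v)=p(v)/Z(\gamma)\) otherwise.

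\textbf{Mass and odds.} Summing over \(\mathcal{S}^{+}\) gives \(Q_{+}=e^{\gamma}P_{+}/Z(\gamma)\), which is the first identity in Eq.~(\ref{eq:app_binary_odds}). The complement is \(1-Q_{+}=(1-P_{+})/Z(\gamma)\). Dividing the two cancels \(Z(\gamma)\) and yields the odds identity. This step needs \(0<P_{+}<1\), so that both groups are nonempty and the odds are finite and nonzero.

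\textbf{Within-group shape.} For \(v\in\mathcal{S}^{+}\),
\[
\rho_\gamma(v)/Q_{+}=p(v)/P_{+},
\]
because the common factor \(e^{\gamma}/Z(\gamma)\) cancels. The same cancellation on the complement gives \(\rho_\gamma(v)/(1-Q_{+})=p(v)/(1-P_{+})\). So the target preserves teacher-relative probabilities within each outcome group.

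\textbf{KL identity.} I would use this within-group proportionality, which is essentially the chain rule for KL with the coarse variable \(\mathbb{I}[v\in\mathcal{S}^{+}]\). On \(\mathcal{S}^{+}\) the log-ratio \(\log(\rho_\gamma(v)/p(v))\) is constant and equal to \(\log(Q_{+}/P_{+})\). On the complement it is constant and equal to \(\log((1-Q_{+})/(1-P_{+}))\). Therefore
\[
D_{\mathrm{KL}}(\rho_\gamma\|p)=Q_{+}\log\frac{Q_{+}}{P_{+}}+(1-Q_{+})\log\frac{1-Q_{+}}{1-P_{+}},
\]
which is the Bernoulli KL in Eq.~(\ref{eq:app_binary_kl}). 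Equivalently, the conditional KL terms vanish because the within-group conditionals coincide.

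\textbf{Main obstacle.} Nothing here is deep. The only care needed is in the degenerate cases. If \(P_{+}\in\{0,1\}\), then \(V\) is constant on \(\mathcal{S}\) and \(\rho_\gamma=p\), so the odds expression is undefined. This is exactly why the statement assumes \(0<P_{+}<1\). The positivity \(p(v)>0\) also ensures the logarithms are well defined.
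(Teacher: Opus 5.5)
Your proposal is correct and follows the paper's proof step for step. You split $Z(\gamma)=e^{\gamma}P_{+}+(1-P_{+})$ over the two outcome groups, sum the closed form over $\mathcal{S}^{+}$ to get the mass and odds identities, cancel the common exponential factor within each group, and obtain the Bernoulli KL from the chain rule over the binary partition, which you simply make explicit by noting that $\log(\rho_\gamma(v)/p(v))$ is constant on each group.
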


\begin{proof}
For successful branches the exponential factor in
Eq.~(\ref{eq:app_closed_form_target}) is \(e^\gamma\), whereas for unsuccessful branches it is \(1\). Hence
\[
Z(\gamma)=e^\gamma P_{+}+(1-P_{+}),
\]
and summing Eq.~(\ref{eq:app_closed_form_target}) over
\(\mathcal{S}^{+}\) gives Eq.~(\ref{eq:app_binary_odds}). If two candidates have the same binary value, their exponential factors cancel in their probability ratio, leaving the corresponding teacher odds unchanged. Hence the conditional distributions within \(\mathcal{S}^{+}\) and its complement are unchanged. Applying the chain rule for KL divergence over this binary partition yields Eq.~(\ref{eq:app_binary_kl}).
\end{proof}

Equation~(\ref{eq:app_binary_kl}) shows that calibration spends its entire KL deviation on reallocating probability mass between the two outcome groups, introducing no within-group distortion. This proposition also clarifies the interaction between probing, gating, and calibration. If \(P_{+}=0\), the position is removed by the \(\mathcal{B}^{+}\) gate. If \(P_{+}=1\), all candidates have equal value and \(\rho_\gamma=p\). In the informative mixed case, outcome calibration multiplies the aggregate prior odds of successful versus unsuccessful branches by exactly \(e^\gamma\), while retaining the finer teacher preferences within each group.

\section{Implementation Details}
\label{app:more_experiment_details}
\label{app:implementation_details}

\paragraph{Off-policy Training.}
Following \citep{jin2026entropyawareonpolicydistillationlanguage}, KD is implemented with DistillKit \citep{distillkit2024}, with one teacher response sampled per training problem. Table~\ref{tab:kd_main_hparams} summarizes the off-policy configuration.

\begin{table}[H]
\centering
\caption{Hyperparameters used for off-policy distillation.}
\label{tab:kd_main_hparams}
\small
\renewcommand{\arraystretch}{1.08}
\setlength{\tabcolsep}{7pt}
\begin{tabular}{@{}lc@{}}
\toprule
\textbf{Hyperparameter} & \textbf{Value} \\
\midrule
Learning rate & $1\times10^{-5}$ \\
LR scheduler type & cosine \\
Optimizer & AdamW \\
CE loss weight & 0.5 \\
Forward-KL loss weight & 0.5 \\
Training batch size & 128 \\
Training epoch & 3 \\
Cutoff length & 4{,}096 \\
Top-$k$ (for FKL) & 16 \\
\bottomrule
\end{tabular}
\end{table}

\paragraph{On-policy Training.}
We implement OPD, EOPD, GRPO, and \alg{} with \texttt{verl} \citep{Sheng_2025} and generate rollouts asynchronously with \texttt{SGLang} \citep{zheng2024sglangefficientexecutionstructured}. All main runs use 4 $\times$ A800 GPUs, FSDP2, and bfloat16. Table~\ref{tab:onpolicy_main_hparams} summarizes the principal training hyperparameters.

\begin{table}[H]
\centering
\caption{Hyperparameters used for on-policy distillation and GRPO.}
\label{tab:onpolicy_main_hparams}
\small
\renewcommand{\arraystretch}{1.08}
\setlength{\tabcolsep}{14pt}
\begin{tabular}{@{}l|c|c@{}}
\toprule
\textbf{Hyperparameter} & \textbf{OPD, EOPD, SPOT} & \textbf{GRPO} \\
\midrule
Learning rate & $3\times10^{-6}$ & $3\times10^{-6}$ \\
LR scheduler type & cosine & cosine \\
Optimizer & AdamW & AdamW \\
Training batch size & 128 & 128 \\
Mini-batch size & 32 & 32 \\
Samples per prompt & 1 & 8 \\
Temperature & 1.0 & 1.0 \\
Top-$p$ & 1.0 (Qwen), 0.8 (Llama) & 1.0 (Qwen), 0.8 (Llama) \\
Max prompt length & 1{,}024 & 1{,}024 \\
Max response length & 4{,}096 & 4{,}096 \\
Clipping ratio & 0.2 & 0.2 \\
\bottomrule
\end{tabular}
\end{table}

All methods use AdamW, weight decay $0.01$, gradient clipping at $1.0$, and four mini-batch updates per rollout iteration; the distillation methods use no learning-rate warmup. EOPD applies a unit-weight top-16 forward-KL loss at teacher-entropy values of at least $0.8$, while GRPO uses KL and entropy coefficients of $10^{-3}$ and $0$, respectively.

\paragraph{Training Schedule.}
Table~\ref{tab:main_training_schedule} records the scale-specific data and effective schedules of the main on-policy runs.

\begin{table}[H]
\centering
\caption{Training data and schedules. Iterations denote rollout iterations.}
\label{tab:main_training_schedule}
\small
\setlength{\tabcolsep}{4pt}
\begin{tabular}{@{}llrrr@{}}
\toprule
Student & Training set & Examples & Epochs & Iterations \\
\midrule
Qwen3-0.6B-Base & MATH & 7{,}500 & 3 & 174 \\
Qwen3-1.7B-Base & MATH & 7{,}500 & 3 & 174 \\
Qwen3-4B-Base & DAPO-Math & 14{,}116 & 2 & 220 \\
\bottomrule
\end{tabular}
\end{table}

Training examples are shuffled, no validation set is used, and prompts exceeding the listed length limit are filtered.

\paragraph{Chat Template.}
Following EOPD, the Qwen runs of OPD, EOPD, and \alg{} use the Qwen3-8B teacher's non-thinking chat format. With thinking disabled, the Qwen3-Base tokenizer renders the same prompt: \texttt{<|im\_start|>user\textbackslash n\{query\}<|im\_end|>\textbackslash n\allowbreak <|im\_start|>assistant\textbackslash n<think>\textbackslash n\textbackslash n\allowbreak </think>\textbackslash n\textbackslash n}. GRPO instead uses the default Qwen3-Base generation prompt, which omits the empty thinking block: \texttt{<|im\_start|>user\textbackslash n\{query\}<|im\_end|>\textbackslash n\allowbreak <|im\_start|>assistant\textbackslash n}.

\paragraph{SPOT-specific Training.}
Table~\ref{tab:spot_main_hparams} summarizes the hyperparameters specific to \alg{}; the acquisition, probing, gating, and loss are defined in Algorithm~\ref{alg:spot} and Section~\ref{sec:spot}.

\begin{table}[H]
\centering
\caption{Hyperparameters used for SPOT.}
\label{tab:spot_main_hparams}
\small
\renewcommand{\arraystretch}{1.08}
\setlength{\tabcolsep}{6pt}
\begin{tabular}{@{}lc@{}}
\toprule
\textbf{Hyperparameter} & \textbf{Value} \\
\midrule
Selected positions $M$ & 2 \\
Scoring top-$k$ $k_s$ & 16 \\
Probing top-$k$ $k_p$ & 4 \\
$\lambda_{\mathrm{mass}},\lambda_{\mathrm{shape}}$ & $0.5,0.5$ \\
Reward-tilt coefficient $\gamma$ & 1.0 \\
Branch-loss weight $\beta$ & 0.1 \\
Probe rollouts per candidate & 1 \\
Training verifier & binary rule-based boxed-answer match \\
\bottomrule
\end{tabular}
\end{table}


\paragraph{Evaluation Details.}
\label{app:evaluation_details}
We use vLLM with bfloat16 precision and random seed 42, render one user message with the model's native chat template and thinking disabled, and append the instruction \texttt{Please reason step by step, and put your final answer within \textbackslash boxed\{\}.} We extract the final balanced \texttt{\textbackslash boxed\{\}} expression and use Math-Verify \citep{Kydlicek_Math-Verify_Math_Verification} for symbolic equivalence; missing or unparsable answers are counted as incorrect.
\begin{table}[H]
\centering
\caption{Sizes of the evaluation benchmarks.}
\label{tab:main_evaluation_protocol}
\small
\renewcommand{\arraystretch}{1.08}
\setlength{\tabcolsep}{12pt}
\begin{tabular}{@{}lr@{}}
\toprule
\textbf{Benchmark} & \textbf{Problems} \\
\midrule
MATH-500 & 500 \\
AMC 2023 & 40 \\
Minerva Math & 272 \\
HMMT 2025 (Feb.+Nov.) & 60 \\
AIME 2024 & 30 \\
AIME 2025 & 30 \\
\bottomrule
\end{tabular}
\end{table}

\section{More Ablation Experiment Results}
\label{app:more_ablation_results}

\subsection{Out-of-Domain Generalization}
\label{app:ood_generalization_results}

Table~\ref{tab:ood_results} reports the complete out-of-domain results
analyzed in Section~\ref{subsec:ood_generalization}. Together, these benchmarks
separate deliberative reasoning, broad knowledge, and instruction following,
providing a more diagnostic view of transfer than any single aggregate score.

\begin{table*}[t]
\centering
\colorlet{oodours}{blue!7}
\newcommand{\oodcell}[1]{\cellcolor{oodours}#1}
\caption{
Out-of-domain results for Qwen3-1.7B-Base trained on MATH (all values in \%).
GPQA-Diamond uses eight samples per question; MMLU-Pro uses category-matched
five-shot CoT demonstrations and reports Pass@1; and AlpacaEval 2.0 reports
win rate (WR) and length-controlled win rate (LC-WR). 
\textbf{Bold} and \underline{underlined} entries denote the best and
second-best 
results, respectively. The \alg{} column is shaded.
AlpacaEval scores marked
$^\dagger$ use 803 (EOPD) and 800 (\alg{}) valid judgments
out of 805, respectively.
}
\label{tab:ood_results}
\renewcommand{\arraystretch}{1.15}
\setlength{\tabcolsep}{7pt}
\begin{tabular}{@{}llccccc@{}}
\toprule
Benchmark & Metric & KD & GRPO & OPD & EOPD & \oodcell{\textbf{\alg{} (Ours)}} \\
\midrule
\multirow{2}{*}{GPQA-Diamond}
& Avg@8  & 21.10 & 19.26 & 25.00 & \second{27.21} & \oodcell{\best{29.42}} \\
& Pass@8 & 62.18 & 55.05 & 67.17 & \second{69.19} & \oodcell{\best{80.81}} \\
\midrule
MMLU-Pro
& Pass@1 & 38.13 & 41.86 & 41.23 & \best{42.90} & \oodcell{\second{42.26}} \\
\midrule
\multirow{2}{*}{AlpacaEval 2.0}
& LC-WR & 24.10 & 21.86 & 27.08 & \best{28.13}$^\dagger$ & \oodcell{\second{27.59}$^\dagger$} \\
& WR & 27.58 & 25.71 & 31.18 & \second{33.23}$^\dagger$ & \oodcell{\best{34.63}$^\dagger$} \\
\bottomrule
\end{tabular}
\end{table*}

The comparison reveals a structured transfer profile rather than uniform
dominance. Relative to the strongest baseline on GPQA-Diamond, \alg{} improves
Avg@8 by 2.21 points and Pass@8 by 11.62 points, and it leads AlpacaEval WR
by 1.40 points. On MMLU-Pro, \alg{} ranks second at 42.26: it trails EOPD
by only 0.64 points, while exceeding GRPO and OPD by 0.40 and 1.03 points,
respectively. This near-best result indicates that branch calibration preserves
broad subject-matter reasoning despite training only on MATH, while EOPD's
edge suggests that token-level uncertainty matching remains especially useful
for heterogeneous knowledge questions. On AlpacaEval, length control reverses
the raw-WR ordering, indicating that part of the preference gain is associated
with response length. Overall, outcome calibration transfers most clearly to
deliberative reasoning, with method-dependent tradeoffs on broader forms of
generalization.

\subsection{Consistency Across Model Families}
\label{app:model_family_results}

To evaluate 
the generality of our method
beyond Qwen3, we repeat the comparison with a
Llama-3.2-3B-Instruct student trained on MATH for three epochs and a
Llama-3.1-8B-Instruct teacher for the distillation methods. All methods retain
their native optimization protocols and are evaluated with eight samples under
identical decoding. Because this setting differs from the Qwen experiments in
architecture, checkpoint type, and student--teacher scale, Table~\ref{tab:ablation_model_family} compares methods only within Llama; the
experiment tests transfer of the relative method pattern, not absolute
performance across families.

\begin{table}[H]
\centering
\definecolor{familyavg}{HTML}{EBF3FC}
\colorlet{familypass}{green!10}
\newcommand{\familyavgcell}[1]{\cellcolor{familyavg}#1}
\newcommand{\familypasscell}[1]{\cellcolor{familypass}#1}
\caption{
\textbf{Results on the Llama model family.}
\textit{Macro avg.}~is the unweighted mean over the four benchmarks,
computed before rounding. \textbf{Bold} and \underline{underlined} entries
denote the best and second-best results, respectively. SPOT cells are
shaded in blue for Avg@8 and green for Pass@8.
}
\label{tab:ablation_model_family}
\small
\setlength{\tabcolsep}{4pt}
\renewcommand{\arraystretch}{1.15}
\begin{tabular}{@{}llcccc@{}}
\toprule
\textbf{Benchmark} & \textbf{Metric} & \textbf{GRPO} & \textbf{OPD}
& \textbf{EOPD} & \textbf{SPOT} \\
\midrule
\multirow{2}{*}{MATH500}
& Avg@8  & \second{43.65} & 35.90 & 37.75 & \familyavgcell{\best{43.78}} \\
& Pass@8 & 61.60 & 63.60 & \second{65.20} & \familypasscell{\best{66.20}} \\
\midrule
\multirow{2}{*}{AMC23}
& Avg@8  & \best{24.06} & 13.13 & 19.06 & \familyavgcell{\second{23.44}} \\
& Pass@8 & \second{47.50} & 45.00 & \second{47.50} & \familypasscell{\best{50.00}} \\
\midrule
\multirow{2}{*}{AIME24}
& Avg@8  & \second{2.08} & 0.83 & 1.67 & \familyavgcell{\best{2.50}} \\
& Pass@8 & \best{16.67} & 6.67 & \second{13.33} & \familypasscell{\best{16.67}} \\
\midrule
\multirow{2}{*}{AIME25}
& Avg@8  & \second{0.00} & \best{0.42} & \second{0.00} & \familyavgcell{\best{0.42}} \\
& Pass@8 & \second{0.00} & \best{3.33} & \second{0.00} & \familypasscell{\best{3.33}} \\
\midrule
\multirow{2}{*}{Macro avg.}
& Avg@8  & \second{17.45} & 12.57 & 14.62 & \familyavgcell{\best{17.53}} \\
& Pass@8 & 31.44 & 29.65 & \second{31.51} & \familypasscell{\best{34.05}} \\
\bottomrule
\end{tabular}
\end{table}

\paragraph{Within-family comparison.}
Within the Llama setting, \alg{} raises macro Avg@8 from 12.57 for OPD to
17.53 and macro Pass@8 from 29.65 to 34.05. Relative to the strongest baseline
for each metric, however, the distinction is asymmetric: Avg@8 is essentially
tied with GRPO (17.53 vs.\ 17.45), whereas Pass@8 exceeds EOPD by 2.54 points.
The benchmark-level results show the same pattern. \alg{} is best or tied-best
in Pass@8 on all four benchmarks, while GRPO remains stronger in AMC23 Avg@8
and OPD ties \alg{} on AIME25. Thus, the evidence supports a more consistent
improvement in multi-sample solution coverage than in average per-sample
accuracy, rather than uniform dominance on every benchmark.

\paragraph{Cross-family interpretation.}
This asymmetry mirrors the Qwen results: across the 0.6B and 1.7B students
trained on MATH and the 4B student trained on DAPO, \alg{} leads macro Pass@8
in every setting, while macro Avg@8 is best or nearly tied with the strongest
baseline. Its recurrence across Base and Instruct checkpoints, two model
families, multiple capacities, and two training-data regimes makes a
Qwen-specific explanation less plausible. The aggregates are not directly
comparable because the Llama panel contains four benchmarks whereas the Qwen
panel contains six; moreover, architecture, capacity, initialization, teacher
pairing, and training data are not independently controlled. We therefore view
these results as evidence of portability across the evaluated configurations,
not architecture invariance or a causal scaling law.
\end{document}